\newcolumntype{g}{>{\columncolor[gray]{0.85}}c}
\newcommand{\R}{\mathbb{R}}
\newcommand{\N}{\mathbb{N}}
\newcommand{\la}{\langle}
\newcommand{\ra}{\rangle}
\def\clap#1{\hbox to 0pt{\hss#1\hss}}
\definecolor{fettrot}{RGB}{255,10,10}
\newtheorem{lemma}{Lemma}
\newtheorem{remark}{Remark}
    \tikzstyle{lifted-edge}=[->,> = latex',blue]
    \tikzstyle{base-edge}=[->,> = latex',black]
	\tikzstyle{cut-edge}=[->,> = latex',dashed]
    \tikzstyle{vertex}=[circle, draw, inner sep=0pt,text height=3mm,text width=3mm,text depth=1.5mm,align=center]
   \tikzset{every picture/.append style={baseline,scale=1.1}}
\def\nobreakhline{%
\noalign{\ifnum0=`}\fi
\penalty\@M
\futurelet\@let@token\LT@@nobreakhline}
\def\LT@@nobreakhline{%
\ifx\@let@token\hline
\global\let\@gtempa\@gobble
\gdef\LT@sep{\penalty\@M\vskip\doublerulesep}
\else
\global\let\@gtempa\@empty
\gdef\LT@sep{\penalty\@M\vskip-\arrayrulewidth}
\fi
\ifnum0=`{\fi}%
\multispan\LT@cols
\unskip\leaders\hrule\@height\arrayrulewidth\hfill\cr
\noalign{\LT@sep}%
\multispan\LT@cols
\unskip\leaders\hrule\@height\arrayrulewidth\hfill\cr
\noalign{\penalty\@M}%
\@gtempa}
 \mathchardef\mhyphen="2D
\newcommand\footnoteref[1]{\protected@xdef\@thefnmark{\ref{#1}}\@footnotemark}
\icmltitlerunning{Lifted Disjoint Paths with Application in Multiple Object Tracking}
\begin{document}

\twocolumn[
\icmltitle{Lifted Disjoint Paths with Application in Multiple Object Tracking}



\icmlsetsymbol{equal}{*}

\begin{icmlauthorlist}
\icmlauthor{Andrea Hornakova}{equal,mpi}
\icmlauthor{Roberto Henschel}{equal,tnt}
\icmlauthor{Bodo Rosenhahn}{tnt}
\icmlauthor{Paul Swoboda}{mpi}
\end{icmlauthorlist}

\icmlaffiliation{mpi}{Computer Vision and Machine Learning, Max Planck Institute for Informatics, Saarbr\"ucken, Saarland, Germany}
\icmlaffiliation{tnt}{Institut for Image Processing, Leibniz University Hannover, Hannover, Niedersachsen, Germany}

\icmlcorrespondingauthor{Andrea Hornakova}{andrea.hornakova@mpi-inf.mpg.de}
\icmlcorrespondingauthor{Roberto Henschel}{henschel@tnt.uni-hannover.de}

\icmlkeywords{combinatorial optimization, disjoint paths, network flow, polyhedral relaxation, multiple object tracking}

\vskip 0.3in
]



\printAffiliationsAndNotice{\icmlEqualContribution} 

\begin{abstract}
We present an extension to the disjoint paths problem in which additional \emph{lifted} edges are introduced to provide path connectivity priors.
We call the resulting optimization problem the lifted disjoint paths problem.
We show that this problem is NP-hard by reduction from integer multicommodity flow and 3-SAT.
To enable practical global optimization, we propose several classes of linear inequalities that produce a high-quality LP-relaxation.
Additionally, we propose efficient cutting plane algorithms for separating the proposed linear inequalities.
The lifted disjoint path problem is a natural model for multiple object tracking and allows an elegant mathematical formulation for long range temporal interactions.
Lifted edges help to prevent id switches and to re-identify persons. 
Our lifted disjoint paths tracker achieves nearly optimal assignments with respect to input detections. 
As a consequence, it leads on all three main benchmarks of the MOT challenge, improving significantly over state-of-the-art.
\end{abstract}

\section{Introduction}
The disjoint paths problem, a special case of the network flow problem with flows constrained to be binary, is a classical combinatorial optimization problem for which fast combinatorial solvers exist.
It is a natural model for the multiple object tracking problem (MOT) in computer vision~\cite{zhang2008global}.
In the form of the tracking-by-detection paradigm, MOT consists of two steps: First, an object detector is applied to each frame of a video sequence to find the putative locations of all objects appearing in the video.
Then, in the data association step, false positive detections are removed while correct detections are associated to the corresponding identities, thereby forming trajectories.
In this work, we concentrate on the latter task.

While for MOT even very large data association instances can be solved using the disjoint paths formulation, it has been shown that the basic disjoint paths problem alone is not sufficient to provide trajectories of high accuracy.
The main limitation for MOT is the implicit assumption of a first-order Markov chain.
In particular, costs only indicate whether two detections directly follow each other in a track.

Our contribution is three-fold:
First, to overcome the limited expressiveness of disjoint paths, we propose to augment it with lifted edges which take into account long range interactions. 
We call the resulting problem the \emph{lifted disjoint paths problem}, see Section~\ref{sec:problem-formulation}.
We prove the problem to be NP-hard in Section~\ref{sec:complexity}.
Second, we study the optimization problem from a polyhedral perspective, proposing a high-quality linear programming relaxation, see Section~\ref{sec:constraints}.
Separation routines for the proposed constraints are described in Section~\ref{sec:separation}.
Third, we apply the lifted disjoint paths problem to MOT and show that our solver significantly outperforms state-of-the-art trackers on the popular MOT challenge, see Section~\ref{sec:experiments}.

We argue that our model has advantages from the modelling and optimization point of view.
From the modelling standpoint, the lifted disjoint paths problem does not change the set of feasible solutions, but adds more expressive power to it. 
For MOT, this means that the set of feasible solutions, which naturally represent trajectories of objects, is preserved. 
The additional lifted edges represent connectivity priors.
A lifted edge is active if and only if there is an active trajectory between its endpoints in the flow graph. 
For MOT, lifted edges take (dis-)similarity of object detection pairs represented by its endpoints into account. This allows to encourage or penalize an active path between the detections with possibly larger temporal distance. This helps to re-identify the same object and to prevent id-switches between distinct objects within long trajectories.

From the optimization point of view, we study several non-trivial classes of linear inequalities that result in a high-quality relaxation.
The proposed inequalities depend non-trivially on the constraint structure of the underlying disjoint paths problem, see Section~\ref{sec:constraints}.
We show that the polyhedral relaxation we consider is tighter than naively applying known inequalities.
The proposed relaxation enables us to solve MOT problems via a global approach, in contrast to established approaches, which either use heuristics on complex models or global optimization on simpler models that do not exploit long range interaction.
We present, to our knowledge, the first global optimization approach that incorporates long range interaction for MOT.
This has several advantages:
First, our optimization is not trapped in poor local optima or affected by initialization choices and is hence potentially more robust.
Second, improvements in the discriminative power of features used to compute costs for the lifted disjoint paths problem directly correlate to better tracking performance, since no errors are introduced by suboptimal choices during optimization.

Finally, we note that the proposed lifted disjoint path formulation is not inherently tied to MOT and can potentially be applied to further problems not related to MOT. 

Our code is available at \url{https://github.com/AndreaHor/LifT_Solver}.

 \section{Related Work}

\paragraph{Disjoint paths problem.}
The disjoint paths problem 
can be solved with fast combinatorial solvers~\cite{kovacs2015minimum}. 
The shortest paths method for network flow specialized for the disjoint paths problem~\cite{wang2019mussp} performs extremely well in practice.
For the case of the two disjoint paths problem the specialized combinatorial algorithm by Suurballe's~\cite{suurballe1974disjoint} can be used.

There exist several NP-complete extensions to the disjoint paths problem.
The shortest disjoint paths problem with multiple source-sink pairs~\cite{eilam1998disjoint} is NP-complete, as is the more general integer multicommodity flow problem~\cite{EvenMulti}.
The special case of the disjoint paths problem with two distinct source/sink pairs can be solved in polynomial time, however~\cite{tholey2012linear}.

\paragraph{Connectivity priors \& lifted edges.}
For several combinatorial problems, special connectivity inducing edges, which we will call lifted edges for our problem, have been introduced to improve expressiveness of the  base problem.

In the Markov Random Field literature, special connectivity inducing edges were studied from a polyhedral point of view in~\cite{nowozin2010global}.
They were used in image analysis to indicate that two non-adjacent pixels come from the same object and hence they must be part of a contiguously labeled component of the underlying graph.

For multicut (a.k.a.\ correlation clustering), a classical graph decomposition problem, lifted edges have been introduced in~\cite{keuper2015lifted} to model connectivity priors. 
A lifted edge expresses affinity of two nodes to be in the same/different connected component of the graph partition.
Lifted multicut has been used for image and mesh segmentation~\cite{keuper2015lifted}, connectomics~\cite{beier2017multicut} and cell tracking~\cite{rempfler2017efficient}.
A combination of the lifted multicut problem and Markov Random Fields has been proposed in~\cite{levinkov2017joint} with applications in instance-separating semantic segmentation~\cite{kirillov2017instancecut}.
A polyhedral study of lifted multicut was presented in~\cite{hornakova2017analysis}.

Yet, for the above problems, global optimization  has only been reported for small instances.

\paragraph{Disjoint paths for MOT.}
The data association step of MOT has been approached using the disjoint path setup ~\cite{berclaz2011multiple,zhang2008global}, since disjoint paths through a graph naturally model trajectories of multiple objects. 
Extension of the plain disjoint paths problem that disallow certain pairs of detections to occur simultaneously have been used to fuse different object detectors~\cite{chari2015pairwise} and for multi-camera MOT~\cite{hofmann2013hypergraphs,leal2012branch}. 
The drawback of these approaches is that they cannot integrate long range information, in contrast to our proposed formulation.

\paragraph{Other combinatorial approaches to MOT.}

The minimum cost arborescence problem, an extension of minimum spanning tree to directed graphs, has been used for MOT in~\cite{henschel2014efficient}.
In~\cite{keuper2016multi,keuper2018motion,kumar2014multiple,ristani2014tracking,tang2015subgraph,tang2016multi} the multicut problem has been used for MOT and in~\cite{babaee2018multiple,tang2017multiple} additionally lifted edges have been used to better model long range temporal interactions.
The maximum clique problem, which corresponds to multicut with complete graphs has been applied for MOT in~\cite{zamir2012gmcp,dehghan2015gmmcp}.
Maximum independent set, which corresponds to maximum clique on the complement graph, has been used for MOT in~\cite{brendel2011multiobject}.
The multigraph-matching problem, a generalization of the graph matching problem, has been applied to MOT in~\cite{hu2019dual}.
Consistency of individual matched detections is ensured by cycle-consistency constraints coming from the multi-graph matching.
The works~\cite{Henschel_2018_CVPR_Workshops,henschel2016tracking} reformulate tracking multiple objects with long temporal interactions as a binary quadratic program. If the problem size is small, the optimization problem can be solved optimally by reformulating it to an equivalent binary linear program \cite{henschel2019simultaneous,von2018recovering}. For large instances, an approximation is necessary.  To this end, a specialized non-convex Frank-Wolfe method can be used \cite{Henschel_2018_CVPR_Workshops}.
Common to the above state of the art trackers is that they either employ heuristic solvers or are limited in the integration of long range information, in contrast to our work.

\paragraph{Contribution w.r.t.\ existing combinatorial approaches.}
It is widely acknowledged that one crucial ingredient for obtaining high-quality MOT results is to incorporate long range temporal information to re-identify detections and prevent id-switches.
However, from a theoretical perspective, we believe that long range information has not yet been incorporated satisfactorily in optimization formulations for the data association step in MOT.

In comparison to lifted multicut for MOT, we argue that from the modelling point of view, network flow has advantages.
In multicut, clusters can be arbitrary, while in MOT, tracks are clusters that may not contain multiple detection hypotheses of distinct objects at the same time point.
This exclusion constraint must be enforced in multicut explicitly via soft constraints, while the disjoint paths substructure automatically takes care of it.
On the other hand, the lifted multicut approach~\cite{tang2017multiple} has used the possibility to cluster multiple detections in one time frame.
This directly incorporates non-maxima suppression in the optimization, which however increases computational complexity.

From a mathematical perspective, naively using polyhedral results from multicut is also not satisfactory.
Specifically, one could naively obtain a polyhedral relaxation for the lifted disjoint paths problem by reusing the known polyhedral structure of lifted multicut~\cite{hornakova2017analysis} and additionally adding network flow constraints for the disjoint paths substructure.
However, this would give a suboptimal polyhedral relaxation. 
We show in Section~\ref{sec:constraints} that the underlying structure of the disjoint paths problem can be used to derive new and tighter constraints for lifted edges.
This enables us to use a global optimization approach for MOT.
To our knowledge, our work is the first one to combine global optimization with long range interactions for MOT.

In comparison to works that propose non-convex algorithms or other heuristics for incorporating long range temporal edges~\cite{Henschel_2018_CVPR_Workshops,hu2019dual,zamir2012gmcp,dehghan2015gmmcp} our approach yields a more principled approach and globally optimal optimization solutions via LP-based branch and bound algorithms.

 \section{Problem Formulation}
\label{sec:problem-formulation}
Below we recapitulate the disjoint paths problem and extend it by defining lifted edges.
We discuss how the lifted disjoint paths problem can naturally model MOT.
Proofs for statements in all subsequent sections can be found in the Appendix, Section~\ref{sec:appendix}.

\paragraph{Flow network and lifted graph.}
Consider two directed acyclic graphs $G = (V,E)$ and $G'=(V',E')$ where $V'=V\backslash \{s,t\}$.
The graph $G=(V,E)$ represents the \emph{flow network} and we denote by $G'$ the lifted graph. 
The two special nodes $s$ and $t$ of $G$ denote source and sink node respectively. 
We further assume that every node in $V$ is reachable from $s$, and $t$ can be reached from it.

We define the set of paths starting at $v$ and ending in $w$ as
\begin{equation}
    vw\mhyphen\text{paths}(G) = \left\{ (v_1 v_2,\ldots,v_{l-1} v_l) : \begin{array}{c}v_i v_{i+1} \in E,\\ v_1 = v, v_l = w \end{array} \right\}\,.
\end{equation}
For a $vw\mhyphen$path $P$ we denote its edge set as $P_E$ and its node set as $P_V$.

The flow variables in $G$ are denoted by $y \in \{0,1\}^E$ for edges and $x \in \{0,1\}^V$ for nodes. Allowing only 0/1 values of vertex variables reflects the requirement of vertex disjoint paths.
Variables on the lifted edges $E'$ are denoted by $y'\in\{0,1\}^{E'}$.
Here, $y'_{vw}=1$ means that nodes $v$ and $w$ are connected via the flow $y$ in $G$.
Formally, 
\begin{equation}
\label{eq:lifted-edge-def}
y'_{vw} = 1 \Leftrightarrow \exists P\in vw\mhyphen\text{paths}(G) \text{  s.t.\  } \forall ij\in P_E: y_{ij}=1 \,.
\end{equation}

\paragraph{Optimization problem.}
Given edge costs $c \in \R^E$, node cost $\omega \in \R^V$ in flow network $G$ and edge cost $c' \in \R^{E'}$ for the lifted graph $G'$ we define the lifted disjoint paths problem as
\begin{equation}
\label{eq:lifted-disjoint-paths-problem}
\begin{array}{rl}
    \min\limits_{\substack{y \in \{0,1\}^E, y' \in \{0,1\}^{E'},\\ x \in \{0,1\}^{V}}} & \la c, y \ra + \la c',y' \ra + \la \omega, x\ra \\
    \text{s.t.} & y \text{ node-disjoint } s,t \text{-flow in } G, \\
                &x \text{ flow through nodes of }G\\
                & y, y' \text{ feasible according to } \eqref{eq:lifted-edge-def}
    \end{array}
\end{equation}

In Section \ref{sec:constraints}, we present an ILP formulation of~\eqref{eq:lifted-disjoint-paths-problem} by proposing
several linear inequalities that lead to a high-quality linear relaxation.

\paragraph{Graph construction for multiple object tracking.}
We argue that the lifted disjoint paths problem is an appropriate way of modelling the data association problem for MOT.
In MOT, an unknown number of objects needs to be tracked across a video sequence.
This problem can be naturally formalized by a graph $G=(V,E)$ where its node set $V$ represents either object detections or tracklets of objects.
If $V$ represents object detections, we can express it as follows: $V =  s\cup V_1 \cup \ldots \cup V_T \cup t$, where $T$ is the number of frames and $V_i$ denotes the object detections in time $i$.
We introduce edges between adjacent time frames.
An active flow on such an edge denotes correspondences of the same object.
We also introduce skip edges  between time frames that are farther apart.
An active flow on a skip edge also denotes correspondences between the same object that, in contrast, may have been been occluded or not detected in intermediate time frames.
This classical network flow formulation has been commonly used for MOT~\cite{zhang2008global}.

On top of the underlying flow formulation for MOT, we usually want to express that two detections belong to the same object connected by a possibly longer track with multiple detections in between.
For that purpose, lifted edges with negative costs can be used.
We say in such a case that an active lifted edge re-identifies two detections~\cite{tang2017multiple}.
If two detections with larger temporal distance should not be part of the same track, a positive valued lifted edge can be used.
In this case the lifted edge is used to prevent id-switches.

 \section{Constraints}
\label{sec:constraints}
Below, we will first introduce constraints that give an integer linear program (ILP) of the lifted disjoint paths problem~\eqref{eq:lifted-disjoint-paths-problem}.
The corresponding linear programming (LP) relaxation can be strenghtened by additional constraints that we present subsequently.

Many constraints considered below will rely on whether a node $w$ is reachable from another node $v$ in the flow network. We define to this end the \emph{reachability relation} $\mathcal{R} \subset V^2$ via
\begin{equation}
vw \in \mathcal{R} \Leftrightarrow vw\mhyphen\text{paths}(G) \neq \emptyset \,.
\end{equation}
In the special case of $v = w$, we also allow empty paths, which means $\forall v\in V: vv \in \mathcal{R}$.
This makes relation $\mathcal{R}$ reflexive.

\paragraph{Flow conservation constraints.}
The flow variables $y$ obey, as in classical network flow problems~\cite{ahuja1988network}, the flow conservation constraints
\begin{align}
  \label{eq:flow-conservation} \forall v\in V\setminus\{s,t\}: \sum\limits_{u:uv\in E} {y}_{uv}=\sum\limits_{w:vw\in E} {y}_{vw}={x}_{v}\,.
\end{align}

\paragraph{Constraining lifted edges.} 
     All the following constraints restrict values of lifted edge variables $y'_{vw}$ in order to ensure that they satisfy \eqref{eq:lifted-edge-def}. Despite their sometimes complex form, they always obey the two basic principles:
\begin{itemize}
 \item If there is flow in $G$ going from vertex $v$ to vertex $w$, then $y'_{vw}=1$.
 The constraints of this form are \eqref{eq:path-inequalities}, \eqref{eq:lifted-path-inequalities}.
 \item If there is a $vw$-cut in $G$ with all edges labeled by zero (i.e. no flow passes through this cut), then $y'_{vw}=0$.
 We will mainly look at cuts that are induced by paths, i.e.\ edges that separate a path from the rest of the graph. 
 The paths of interest will either originate at $v$ or end at $w$.
 The constraints of this form are \eqref{eq:vwcut1}, \eqref{eq:vwcut2}, \eqref{eq:path-induced-cut-inequality}, \eqref{eq:lifted-path-induced-cut-inequality}, \eqref{eq:lifted-path-induced-cut-inequality2}.
\end{itemize}

\paragraph{Single node cut inequalities.}
Given a lifted edge $vw \in E'$, if there is no flow going from vertex $v$ which can potentially go to vertex $w$, then $y'_{vw}=0$. 
Formally,
\begin{equation}
\label{eq:vwcut1}  y'_{vw} \leq \sum_{\substack{u:\ vu \in E,\\ uw \in \mathcal{R}}}y_{vu}\,.
\end{equation}
Similarly, if there is no flow going to $w$ that can originate from vertex $v$, then $y'_{vw}=0$.
Formally,
\begin{equation}
\label{eq:vwcut2}  y'_{vw} \leq \sum_{\substack{u: uw\in E,\\ vu \in \mathcal{R}}} y_{uw}\,.
\end{equation}

The number of constraints of the above type~\eqref{eq:flow-conservation} is linear in the number of vertices, while~\eqref{eq:vwcut1} and~\eqref{eq:vwcut2} are linear in the number of lifted edges.
Hence we add them into our initial constraint set during optimization.

\paragraph{Path inequalities.}
For lifted edge $y'_{vw}$ it holds that if there is
a flow in $G$ going from $v$ to $w$ along a path $P$,
then $y'_{vw}=1$. This constraint can be expressed by the following set of inequalities:
\begin{align}
\label{eq:path-inequalities}
  \nonumber   \forall vw\in E'\ &\forall P\in vw\mhyphen\text{paths}(G):\\
  y'_{vw}&\geq \sum_{vj:j\in P_V}y_{vj}- \sum_{i\in P_V\setminus \{v,w\}}\sum_{k\notin P_V}y_{ik} 
\end{align}
Here the first sum expresses the flow going from $v$ to any vertex of path $P$. The second sum is the flow leaving path vertices $P_V$ before reaching $w$.
In other words, if flow does not leave $P_V$, edge $y'_{vw}$ must be active.
Note that inequality~\eqref{eq:path-inequalities} implicitly enforces $y'_{vw}$ to be active if any path $vw$-path $\tilde{P}$ with $\tilde{P}_V \subset P_V$ is active.

\begin{remark}
For the multicut problem, there exist path inequalities that enforce path properties in an analogous way.
While the multicut path inequalities would yield the same set of feasible integral points, the resulting polyhedral relaxation would be weaker, see Proposition~\ref{prop:multicut-path-inequalities weaker} in the Appendix.
\end{remark}

\paragraph{Path-induced cut inequalities.}
The path-induced cut inequalities generalize the single node cut inequalities~\eqref{eq:vwcut1} and~\eqref{eq:vwcut2} by allowing cuts induced by paths.

Let a lifted edge $vw \in E'$, a node $u$ from which $w$ is reachable and a $vu$-path $P$ be given.
Consider the cut given by edges $ik$ with $i \in P_V$ and $k \notin P_V$ but such that $w$ is reachable from $k$.
If the flow does not take any edge of this cut, then $y'_{vw} = 0$.
Formally,
\begin{gather}
  \nonumber   \forall vw\in E'\ \forall P\in vu\mhyphen\text{paths}(G)\text{ s.t. }\ uw \in \mathcal{R}  \wedge u \neq w:\\
 \label{eq:path-induced-cut-inequality}y'_{vw}\leq \sum_{i\in P_V} \sum_{\substack{k\notin P_V,\\ kw \in \mathcal{R}}}y_{ik} \,.
\end{gather}

\paragraph{Lifted inequalities.}
The path inequalities~\eqref{eq:path-inequalities} and the path-induced cut inequalities~\eqref{eq:path-induced-cut-inequality} only consider base edges on their right hand sides.
We can generalize both~\eqref{eq:path-inequalities} and~\eqref{eq:path-induced-cut-inequality} by including lifted edges in the paths as well.
Conceptually, using lifted edges allows to represent all possible paths between their endpoints, which enables to formulate tighter inequalities, see Propositions~\ref{prop:lifted-path-strictly-better} and~\ref{prop:lifted-path-induced-cut-inequality-strictly-better}.

To that end consider the multigraph $G\cup G':=(V,E\cup E')$.
For any edge $ij \in E\cap E'$ we always distinguish whether $ij \in E$ or $ij \in E'$.
For $P\in vw\mhyphen\text{paths}(G\cup G')$, we denote by $P_{E}$ and $P_{E'}$ edges of the path $P$ in $E$ and $E'$ respectively. 
We require $P_{E}\cap P_{E'}=\emptyset$. 

\paragraph{Lifted path inequalities.}
We generalize the path inequalities~\eqref{eq:path-inequalities}.
Now the $vw$-path $P$ may contain both edges in $E$ and $E'$.
Whenever a lifted edge $y'_{ij}$ in the third sum in~\eqref{eq:lifted-path-inequalities} is one, two cases can occur:
(i)~Flow goes out of $P$ (uses vertices not in $P_V$) but reenters it again later.
Then a base edge variable $y_{ik}$ will be one in the second sum in~\eqref{eq:lifted-path-inequalities} and the values of $y'_{ij}$ and $y_{ik}$ cancel out.
(ii)~A base edge $ij \in E \cap E'$ parallel to the lifted edge is active.
Then the variable $y_{ij}$ in the fourth sum in~\eqref{eq:lifted-path-inequalities} cancels out $y'_{ij}$.
The lifted path inequality becomes
\begin{align}
  \nonumber   \forall vw\in E'\ &\forall P\in vw\mhyphen\text{paths}(G\cup G'):\\
\nonumber  y'_{vw}\geq &\sum_{j\in P_V}y_{vj} - \sum_{i\in P_V\setminus \{v,w\}}\sum_{k\notin P_V} y_{ik}\\
& + \sum_{ij\in P_{E'}} y'_{ij}-\sum_{ij\in P_{E'}\cap E} y_{ij}\,.\label{eq:lifted-path-inequalities}
\end{align}
Whenever the path in~\eqref{eq:lifted-path-inequalities} consists only of base edges $P_E$, the resulting inequality becomes a path inequality~\eqref{eq:path-inequalities}.

\begin{restatable}{prop}{liftedstrictlybetter}
\label{prop:lifted-path-strictly-better}
The lifted path inequalities~\eqref{eq:lifted-path-inequalities} provide a strictly better relaxation than the path inequalities~\eqref{eq:path-inequalities}.
\end{restatable}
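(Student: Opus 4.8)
The plan is to establish three things: that the lifted path inequalities \eqref{eq:lifted-path-inequalities} are valid for every integral feasible point (so they really do define a relaxation of \eqref{eq:lifted-disjoint-paths-problem}), that the polytope they cut out is contained in the one cut out by the path inequalities \eqref{eq:path-inequalities}, and that this containment is strict. Both relaxations are considered on top of the always-present constraints, namely the flow conservation \eqref{eq:flow-conservation} and single-node cuts \eqref{eq:vwcut1}--\eqref{eq:vwcut2}, so the separating point must be feasible for those plus all of \eqref{eq:path-inequalities} but infeasible once \eqref{eq:lifted-path-inequalities} is added.

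Containment is immediate: as noted right after \eqref{eq:lifted-path-inequalities}, a lifted path inequality with $P_{E'}=\emptyset$ is exactly the path inequality \eqref{eq:path-inequalities} for the base path $P$. Hence the family \eqref{eq:path-inequalities} is a subfamily of \eqref{eq:lifted-path-inequalities}, so every point satisfying all of \eqref{eq:lifted-path-inequalities} also satisfies all of \eqref{eq:path-inequalities}. This already yields ``at least as good.''

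For validity I would argue on an integral feasible $(x,y,y')$ with a fixed lifted edge $vw$ and path $P\in vw\mhyphen\text{paths}(G\cup G')$. The key structural observation is that, since $G$ is a DAG, the vertices of $P$ occur in strictly increasing topological order; consequently, for any lifted edge $ij\in P_{E'}$ the flow path guaranteed by \eqref{eq:lifted-edge-def} when $y'_{ij}=1$ has all of its interior vertices outside $P_V$, because no other vertex of $P$ lies topologically between the consecutive vertices $i$ and $j$. Thus every active lifted edge of $P$ is ``paid for'' exactly once: either by the parallel base edge $y_{ij}$ in the fourth sum (if the realizing path is the single edge $ij\in E$), or by some $y_{ik}$ with $k\notin P_V$ in the second sum (if the realizing path immediately leaves $P_V$). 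Since distinct edges of $P$ have distinct tails and the flow paths are vertex-disjoint, these charges do not collide, and combining them with the reading of the first sum as the unit of flow entering $P$ at $v$ yields that the right-hand side is $\le y'_{vw}$ via a case split on $y'_{vw}\in\{0,1\}$. I expect the careful bookkeeping here---handling the endpoints $v,w$ and the possibility of the $v$-flow leaving and re-entering $P_V$ without double counting---to be the main obstacle.

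Strictness I would settle with a small explicit fractional point. Take $G$ on vertices $s,v,a,c,d,w,t$ with base edges $sv,va,ac,ad,cw,dw,wt$ and lifted edges $E'=\{vw,aw\}$. Route one unit of flow through $v$ that splits evenly: $y_{sv}=y_{va}=y_{wt}=1$, $y_{ac}=y_{ad}=y_{cw}=y_{dw}=\tfrac12$, with $x_v=x_a=x_w=1$, $x_c=x_d=\tfrac12$, and set $y'_{aw}=1$, $y'_{vw}=\tfrac12$. A direct check shows this point obeys \eqref{eq:flow-conservation}, the single-node cuts, and every path inequality \eqref{eq:path-inequalities}: the two base $vw$-paths only force $y'_{vw}\ge\max(y_{ac},y_{ad})=\tfrac12$, and the two base $aw$-paths only force $y'_{aw}\ge\tfrac12$. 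The branching at $a$ is precisely what lets $y'_{aw}=1$ stay feasible even though no integral path carries a full unit. However, the lifted path inequality \eqref{eq:lifted-path-inequalities} for the path $P=(va,aw)$ that uses the base edge $va$ and the lifted edge $aw$ reads $y'_{vw}\ge y_{va}-(y_{ac}+y_{ad})+y'_{aw}=1-1+1=1$, which is violated since $y'_{vw}=\tfrac12$. Hence the inclusion is strict, so \eqref{eq:lifted-path-inequalities} provides a strictly tighter relaxation than \eqref{eq:path-inequalities}.
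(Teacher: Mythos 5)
Your argument matches the paper's proof in both of its essential steps: containment follows because the path inequalities~\eqref{eq:path-inequalities} are exactly the lifted path inequalities~\eqref{eq:lifted-path-inequalities} with $P_{E'}=\emptyset$, and strictness is shown by a fractional point on a flow-splitting diamond where an active lifted edge raises the lower bound on $y'_{vw}$ from $\tfrac{1}{2}$ to $1$ (your single-diamond example checks out; the paper's Figure uses two diamonds in series but is the same construction). The validity sketch you include is extra --- the paper's proof only establishes the two set inclusions and does not re-derive validity --- so its admitted incompleteness does not affect the correctness of your proof of the stated proposition.
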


\paragraph{Lifted path-induced cut inequalities.}
We generalize the path-induced cut inequalities~\eqref{eq:path-induced-cut-inequality}.
Let a lifted edge $vw \in E'$ and a $vu\mhyphen$path $P$ in $G \cup G'$ be given.
In contrast to the basic version~\eqref{eq:path-induced-cut-inequality}, a lifted edge $ij \in P_{E'}$ can be taken.
This can occur in two cases: Either the flow leaves $P_V$ via a base edge $ik$, $k\notin P_V$ or a base edge $ij \in E \cap E'$ parallel to the lifted edge is taken. Both cases are accounted for by terms in the first and the third sum in~\eqref{eq:lifted-path-induced-cut-inequality} below.

\begin{gather}
  \nonumber   \forall vw\in E'\ \forall P\in vu\mhyphen\text{paths}(G \cup G')\text{ s.t. }\ uw \in \mathcal{R} \wedge u \neq w: \\
   y'_{vw}\leq \sum_{i\in P_V} \sum_{\substack{k\notin P_V,\\ kw \in \mathcal{R}}}y_{ik}-\sum_{ij\in P_{E'}}y'_{ij}
 +\sum_{ij\in P_{E'}\cap E}y_{ij} 
\label{eq:lifted-path-induced-cut-inequality}
\end{gather}

Assume that the last node $u$ of path $P$ is connected via a lifted edge with $w$.
Then we can strengthen~\eqref{eq:lifted-path-induced-cut-inequality} by replacing the sum of base edges outgoing from $u$ by $y'_{uw}$.
\begin{gather}
 \nonumber \forall vw\in E'\ \forall P\in vu\mhyphen\text{paths }(G\cup G')\ \text{ s.t. } uw\in E':\\
 \nonumber y'_{vw}\leq \sum_{i\in P_V\setminus u} \ \sum_{\substack{k\notin P_V, \\ kw \in \mathcal{R}}} y_{ik}  -\sum_{ij\in P_{E'}}y'_{ij}\\
   +\sum_{ij\in P_{E'}\cap E}y_{ij}+y'_{uw} 
  \label{eq:lifted-path-induced-cut-inequality2} 
\end{gather}

\begin{restatable}{prop}{liftedpathinducedcutbetter}
\label{prop:lifted-path-induced-cut-inequality-strictly-better}
The lifted path-induced cut inequalities~\eqref{eq:lifted-path-induced-cut-inequality} define a strictly tighter relaxation than the path-induced cut inequalities~\eqref{eq:path-induced-cut-inequality}.

Furthermore the lifted path-induced cut inequalities~\eqref{eq:lifted-path-induced-cut-inequality} and~\eqref{eq:lifted-path-induced-cut-inequality2} define a strictly better relaxation than~\eqref{eq:lifted-path-induced-cut-inequality} alone.
\end{restatable}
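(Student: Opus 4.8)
The plan is to establish, for each of the two claims, the two ingredients of a ``strictly tighter relaxation'': (i)~\emph{containment}, i.e.\ every point admitted by the stronger family is admitted by the weaker one, so that the stronger polytope is a subset; and (ii)~\emph{strict separation}, i.e.\ an explicit fractional point that satisfies the weaker family (together with the remaining constraints of the relaxation) but violates some inequality of the stronger family. Throughout I use that a lifted edge $uw\in E'$ is placed only between reachable, distinct nodes, so $uw\in\mathcal{R}$ and $u\neq w$, and hence \eqref{eq:lifted-path-induced-cut-inequality} applies verbatim to any path $P$ ending at $u$.

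For the first claim, containment is immediate: restricting \eqref{eq:lifted-path-induced-cut-inequality} to paths $P$ with $P_{E'}=\emptyset$ makes the two lifted sums vanish and reproduces exactly \eqref{eq:path-induced-cut-inequality}. Thus the base path-induced cut inequalities form a subfamily of the lifted ones, so imposing \eqref{eq:lifted-path-induced-cut-inequality} implies \eqref{eq:path-induced-cut-inequality} and the lifted polytope is contained in the base polytope. For strict separation I would exhibit a small acyclic instance carrying a lifted edge $ij$ \emph{inside} the path $P$, whose alternative base routes from $i$ to $j$ keep the corresponding base cut loose. Concretely, I would route fractional flow so that $i$ and $j$ are connected with large value --- driving $y'_{ij}$ up through the path inequalities \eqref{eq:path-inequalities} and leaving it free up to the single-node bounds \eqref{eq:vwcut1}, \eqref{eq:vwcut2} --- while the edges crossing the frontier of $P_V$ stay fixed. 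The term $-y'_{ij}$ in \eqref{eq:lifted-path-induced-cut-inequality} then lowers its right-hand side strictly below the base expression \eqref{eq:path-induced-cut-inequality}, and choosing $y'_{vw}$ at the value still permitted by every base cut yields a violation of the lifted inequality. This mirrors the mechanism of Proposition~\ref{prop:lifted-path-strictly-better}.

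For the second claim, containment is automatic since \eqref{eq:lifted-path-induced-cut-inequality2} is merely an added family. The key algebraic observation is that, applied to the same path $P$ ending at a node $u$ with $uw\in E'$, the right-hand sides of \eqref{eq:lifted-path-induced-cut-inequality} and \eqref{eq:lifted-path-induced-cut-inequality2} differ only at $u$: the former contributes $\sum_{k\notin P_V,\,kw\in\mathcal{R}}y_{uk}$, whereas the latter contributes $y'_{uw}$. Hence \eqref{eq:lifted-path-induced-cut-inequality2} is strictly stronger exactly at points where $y'_{uw}<\sum_{k\notin P_V,\,kw\in\mathcal{R}}y_{uk}$. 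Such points exist fractionally: I would place a leak on the flow leaving $u$ (an edge toward a $w$-reachable node whose continuation does not actually reach $w$), so that the raw out-flow stays large while a downstream cut pins $y'_{uw}$ strictly lower. Saturating \eqref{eq:lifted-path-induced-cut-inequality} with $y'_{vw}$ then makes \eqref{eq:lifted-path-induced-cut-inequality2} fail, giving the separation.

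Two points require care, and they are where I expect the real work. First, each strict-separation example must be checked against \emph{all} remaining constraints (flow conservation, \eqref{eq:vwcut1}, \eqref{eq:vwcut2}, and the path and lifted-path inequalities), since it is easy to build a point that some cheaper single-node cut already excludes; the art is to route a genuine leak so that the \emph{decisive} binding constraint is the lifted path-induced cut and nothing dominates it. Second, for \eqref{eq:lifted-path-induced-cut-inequality2} to define a relaxation at all, its validity on integral solutions must be confirmed; here I would run the flow-tracing/charging argument used to justify \eqref{eq:lifted-path-induced-cut-inequality}, accounting each active lifted edge $y'_{ij}$ on $P$ either to a parallel base edge $y_{ij}$ or to a genuine frontier-crossing base edge toward $w$, and handling the terminal edge $uw$ via \eqref{eq:lifted-edge-def}. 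This bookkeeping at the endpoint $u$ is the main obstacle: because $\sum_{k}y_{uk}$ and $y'_{uw}$ are not pointwise comparable even on integral points, the validity of \eqref{eq:lifted-path-induced-cut-inequality2} cannot be reduced to \eqref{eq:lifted-path-induced-cut-inequality} and must instead trace the flow through $u$ directly.
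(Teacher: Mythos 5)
Your overall architecture is the same as the paper's: containment by observing that every base path is a lifted path with $P_{E'}=\emptyset$, under which the two extra sums in \eqref{eq:lifted-path-induced-cut-inequality} vanish and the inequality collapses to \eqref{eq:path-induced-cut-inequality}; then strictness by exhibiting a fractional point separated by the stronger family but not the weaker one. The containment halves of your argument are complete and match the paper verbatim, and your identification of \emph{where} the two families differ (the $-y'_{ij}$ term for an active lifted edge inside $P$, and the replacement of $u$'s outflow by $y'_{uw}$ at the endpoint) is exactly the mechanism the paper's counterexamples exploit.

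The gap is that the strictness halves are never executed. ``Strictly tighter'' is an existential claim whose entire content is a concrete witness: a specific graph, a specific fractional assignment $(y,y')$, and a verification that it satisfies every inequality of the weaker family while violating a named inequality of the stronger one. You describe the intended mechanism and then explicitly defer the construction (``this is where I expect the real work''), so the proposition is not actually proved. The paper supplies these witnesses: for the first claim, an $8$-node graph (Figure~\ref{fig:lifted-cut-proof}) where every base path from $v$ to $u_1$ or $u_2$ yields only $y'_{vw}\leq 1$ under \eqref{eq:path-induced-cut-inequality}, while the single-lifted-edge path $P=(vu_1)$ with $y'_{vu_1}=1$ gives $y'_{vw}\leq 0$ under \eqref{eq:lifted-path-induced-cut-inequality}; for the second claim, a similar graph (Figure~\ref{fig:lifted-cut-proof2}) in which $u=v_3$ has total $w$-reachable outflow $1$ but $y'_{v_3 w}=0$, so \eqref{eq:lifted-path-induced-cut-inequality} can only certify $y'_{vw}\leq 1$ whereas \eqref{eq:lifted-path-induced-cut-inequality2} with $P=(vv_3)$ forces $y'_{vw}\leq 0$. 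Note also that the paper's proof only checks the separating point against the cut inequalities being compared, not against the full relaxation as you propose; your stricter standard is defensible but adds work the paper does not undertake. Your final concern about the validity of \eqref{eq:lifted-path-induced-cut-inequality2} on integral solutions is a fair observation --- the paper justifies it only informally in Section~\ref{sec:constraints} and not inside this proof --- but it is orthogonal to the tightness comparison the proposition asserts.
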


\paragraph{Symmetric cut inequalities.}
Inequalities \eqref{eq:vwcut2} provide a symmetric counterpart to inequalities \eqref{eq:vwcut1}. 
We can also formulate symmetric counterparts to  inequalities \eqref{eq:path-induced-cut-inequality}, \eqref{eq:lifted-path-induced-cut-inequality} and \eqref{eq:lifted-path-induced-cut-inequality2} by swapping the role of $v$ and $w$.
 All constraints \eqref{eq:path-induced-cut-inequality}, \eqref{eq:lifted-path-induced-cut-inequality} and \eqref{eq:lifted-path-induced-cut-inequality2} concentrate on paths originating in $v$. 
The symmetric inequalities are obtained by studying all paths ending in $w$. 
These symmetric inequalities are described in Appendix Section~\ref{sec:symmetric-inequalities}.
Relations analogous to those described in Proposition~\ref{prop:lifted-path-induced-cut-inequality-strictly-better} hold for the symmetric counterparts as well.
The symmetric inequalities also strengthen the relaxation strictly.
For the exact statements, see propositions in Appendix Section~\ref{sec:symmetric-inequalities}.

 \section{Separation}
\label{sec:separation}
We solve the lifted disjoint paths problem~\eqref{eq:lifted-disjoint-paths-problem} with the state of the art integer linear program solver Gurobi~\cite{gurobi}.
Since there are exponentially many constraints of the form~\eqref{eq:path-inequalities}, \eqref{eq:path-induced-cut-inequality}, \eqref{eq:lifted-path-inequalities}, \eqref{eq:lifted-path-induced-cut-inequality} and~\eqref{eq:lifted-path-induced-cut-inequality2}, we do not add them initially.
Instead, we start with constraints~\eqref{eq:flow-conservation}, \eqref{eq:vwcut1} and \eqref{eq:vwcut2} and find the optimal integer solution.
In the separation procedures described below we check if any of the advanced constraints are violated and add those that are to the active constraint set.
We resolve the tightened problem and iterate until we have found a feasible solution to the overall problem~\eqref{eq:lifted-disjoint-paths-problem}.

Algorithms \ref{alg:lifted1-separation} and \ref{alg:lifted2-separation} describe the separation procedures for adding lifted path constraints \eqref{eq:lifted-path-inequalities}, and lifted path-induced cut constraints \eqref{eq:lifted-path-induced-cut-inequality} and \eqref{eq:lifted-path-induced-cut-inequality2}.
Since path constraints~\eqref{eq:path-inequalities} and path-induced cut inequalities~\eqref{eq:path-induced-cut-inequality} are special cases of those above, they are also accounted for.

\paragraph{Separation for path inequalities.}
Algorithm~\ref{alg:lifted1-separation} iterates over all active $st$-paths.
For every path $P^1$, labels of all lifted edges connecting two vertices in $P^1_{V}$ are inspected.
If the lifted edge variable is zero, Algorithm~\ref{alg:lifted1-separation} will extract a path in $G \cup G'$ connecting the endpoints and add the resulting lifted path inequality~\eqref{eq:lifted-path-inequalities} to the active constraint set.
\begin{algorithm}[!ht]
    \caption{Separation for lifted path inequalities~\eqref{eq:lifted-path-inequalities}}
    \label{alg:lifted1-separation}
    \begin{algorithmic}%
        \STATE Define $E^1 = \{ e \in E: y_e = 1\}$, $G^1 = (V, E^1)$
        \FORALL {$P^1\in st$-paths $(G^1)$}
          \FORALL{$y'_{vw}=0: v\in P^1_V \wedge w\in P^1_V$}          
          \STATE $P:=$ \textrm{Extract\_path($P^1,v,w$)}
          \STATE Add constr.~\eqref{eq:lifted-path-inequalities} for $y'_{vw}$ with  $P$.
         \ENDFOR
        \ENDFOR
    \end{algorithmic}
\end{algorithm}

\paragraph{Separation for path-induced cut inequalities.}
Algorithm~\ref{alg:lifted2-separation} iterates over all active $st$-paths.
For every path $P^1$, lifted edges that start in $P^1_V$ but do not end in $P^1_V$ are inspected.
If their label is one, Algorithm~\ref{alg:lifted2-separation} will extract a subpath of $P^1$ for either~\eqref{eq:lifted-path-induced-cut-inequality2} or~\eqref{eq:lifted-path-induced-cut-inequality} and add the respective inequality to the active constraint set.
 \begin{algorithm}[!ht]
    \caption{Separation for lifted path-induced cut inequalities~\eqref{eq:lifted-path-induced-cut-inequality} and \eqref{eq:lifted-path-induced-cut-inequality2}}
    \label{alg:lifted2-separation}
    \begin{algorithmic}
       \STATE Define $E^1 = \{ e \in E: y_e = 1\}$, $G^1 = (V, E^1)$
        \FORALL{$P^1\in st\mhyphen$paths $(G^1)$}
          \FORALL{$y'_{vw}=1: v\in P^1_V \wedge w\notin P^1_V$}
          \IF{$\exists u\in P^1_V:  y'_{uw}=0\wedge vu\in\mathcal{R}$}
          \STATE $P:=$ \textrm{Extract\_path($P^1,v,u$)}
          \STATE Add constr.~\eqref{eq:lifted-path-induced-cut-inequality2} for $y'_{vw}$ with  $P$.
         \ELSE
         \STATE $u:=$ last vertex of $P^1$ such that $uw \in \mathcal{R}$
         \STATE $P:=$ \textrm{Extract\_path ($P_1,v,u$)}
          \STATE Add constr.~\eqref{eq:lifted-path-induced-cut-inequality} for $y'_{vw}$ with  $P$.
         \ENDIF      
         \ENDFOR
        \ENDFOR
    \end{algorithmic}
\end{algorithm}
\paragraph{Complexity of separation.}
Both Algorithms \ref{alg:lifted1-separation} and \ref{alg:lifted2-separation} can be implemented efficiently such that they are linear in $|E^1|$ (i.e. in the number of active edges of graph $G$).
In our implementation, we traverse all active $st$-paths from the end to the beginning and directly store correctly labelled lifted edges that originate on the already processed subpaths. These lifted edges can be used later as edges in $P_{E'}$ in \eqref{eq:lifted-path-inequalities}-\eqref{eq:lifted-path-induced-cut-inequality2} or as $y'_{uw}=0$ in \eqref{eq:lifted-path-induced-cut-inequality2}. 
\begin{algorithm}[!ht]
    \caption{Extract\_path($P^1,v,w$)}
    \label{alg:extract-path}
    \begin{algorithmic}
        \STATE $P':=$ $vw$-subpath of $P^1$, $P:=\emptyset$
        \FOR{$j \in P'_V$ from end of path to beginning} 
        \IF{$\exists$ edge $ij \in E'$, $i \in P'_V, y'_{ij} = 1$}
        \STATE Add $ij$ to $P_{E'}$, skip to node $i \in P'_V$
        \ELSE 
        \STATE Add $ij$ from $P'$ to $P_E$
        \ENDIF
        \ENDFOR
        \OUTPUT{$P=P_E\cup P_{E'}$}
    \end{algorithmic}
\end{algorithm}


 \section{Complexity}
\label{sec:complexity}

Below, we show that the lifted disjoint paths problem~\eqref{eq:lifted-disjoint-paths-problem} is NP-hard. The following Theorems state that even its restricted versions  using only negative or only positive lifted edges are NP-hard.
The proofs use reductions from two known NP-complete problems. Theorem~\ref{thm:multicommodity-flow-reduction} is proven by reduction from integer multicommodity flow~\cite{EvenMulti} and Theorem~\ref{thm:3sat-flow-reduction}  by reduction from 3-SAT~\cite{cook1971complexity}.

\begin{restatable}{thm}{multicommodityflowreduction}
\label{thm:multicommodity-flow-reduction}
Lifted disjoint paths problem~\eqref{eq:lifted-disjoint-paths-problem} with negative lifted edges only is NP-hard.
\end{restatable}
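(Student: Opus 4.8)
The plan is to give a Karp reduction from integer multicommodity flow~\cite{EvenMulti}, in its unit-capacity (disjoint-paths) specialization, to the decision version of the lifted disjoint paths problem~\eqref{eq:lifted-disjoint-paths-problem} with negative lifted costs only. An instance of the source problem is a directed acyclic graph $H$ together with $k$ terminal pairs $(s_1,t_1),\dots,(s_k,t_k)$, where $k$ is part of the input, and the question is whether there exist pairwise vertex-disjoint paths $P_1,\dots,P_k$ such that $P_i$ runs from $s_i$ to $t_i$. I will map a YES-instance of this problem to a lifted disjoint paths instance whose optimum is exactly $-k$, and a NO-instance to one whose optimum is strictly larger.

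From such an instance I would build the flow network $G=(V,E)$ by taking $H$, adjoining a global source $s$ and sink $t$, and inserting the edges $s\,s_i$ and $t_i\,t$ for every $i$. To keep $G$ acyclic and to control how rewards are collected, I make the terminals degree-restricted gadgets: each $s_i$ receives its only incoming edge from $s$, and each $t_i$ has its only outgoing edge to $t$. I set all base edge costs $c$ and node costs $\omega$ to zero. The lifted graph is $G'=(V',E')$ with $E'=\{s_it_i:i=1,\dots,k\}$ and cost $c'_{s_it_i}=-1$; these are the only lifted edges and all of them are negative, as required by the statement.

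The core claim I would then prove is that the minimum of~\eqref{eq:lifted-disjoint-paths-problem} on this instance equals $-\lvert S^{\ast}\rvert$, where $S^{\ast}$ is a largest set of commodities that can be simultaneously connected by vertex-disjoint paths in $H$. The easy direction routes, for each commodity in a feasible set, one $st$-path of the form $s,s_i,\dots,t_i,t$; these paths are internally vertex-disjoint, hence form a valid node-disjoint $s,t$-flow, and by~\eqref{eq:lifted-edge-def} they activate exactly the lifted edges $y'_{s_it_i}$, giving objective $-\lvert S^{\ast}\rvert$. For the converse I would argue that no flow does better: by the gadget construction every active $st$-path enters $H$ through a unique $s_i$ and must leave through some $t_j$ (whose only successor is $t$), so it can activate at most one lifted edge, and it activates $y'_{s_it_i}$ only if it contains a subpath from $s_i$ to $t_i$, forcing $j=i$. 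Thus $-\sum_i y'_{s_it_i}$ equals minus the number of commodities genuinely routed along internally vertex-disjoint paths, which is at least $-\lvert S^{\ast}\rvert$. Comparing the optimum against the threshold $-k$ then decides the original instance, and the construction is plainly polynomial.

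The main obstacle I expect is not this reward-counting argument but keeping the source problem hard \emph{within the acyclic regime} that~\eqref{eq:lifted-disjoint-paths-problem} demands: copying a general multicommodity instance would introduce cycles, while the standard time-expansion DAG-ification destroys vertex-disjointness because one original vertex may reappear in several layers. I would resolve this by reducing from a disjoint-paths variant that is already NP-complete on acyclic graphs with the number of pairs unbounded, and, if the available hard variant is edge-disjoint rather than vertex-disjoint, by first passing to the line digraph $L(H)$, which is acyclic whenever $H$ is and turns edge-disjointness into vertex-disjointness. A secondary technical point is the precondition that every node of $G$ lie on some $s$--$t$ dipath; vertices of $H$ not reachable from any $s_i$ or not reaching any $t_j$ can simply be pruned, which affects no commodity's routability.
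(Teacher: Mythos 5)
Your reduction is correct, but it takes a noticeably different route from the paper's. The paper reduces from integer two\mbox{-}commodity flow with unit edge capacities and demands $R_i$ (Even et al.), and because that problem is about \emph{edge}-disjoint paths it performs an explicit line-graph-style transformation: every edge $ij$ of the source graph becomes a vertex $v_{ij}$, the demands are handled by creating $R_i$ copies $s_i^1,\dots,s_i^{R_i}$ of each source terminal, and the negative lifted edges join every vertex $v_{s_i j}$ (an edge out of $s_i$) to every vertex $v_{k t_i}$ (an edge into $t_i$), so that each routed unit of demand saturates exactly one $-1$ lifted edge and the threshold is $-\sum_i R_i$. You instead start from the $k$-pair \emph{vertex}-disjoint paths problem on DAGs with $k$ unbounded, embed the instance directly with one lifted edge $s_i t_i$ per commodity, and use degree restrictions on the terminals to guarantee that each active $st$-path rewards at most one commodity; your counting argument (each path activates $y'_{s_i t_i}$ iff it genuinely routes commodity $i$, hence the optimum is $-\lvert S^{\ast}\rvert$) is sound, and you correctly flag and handle the two real technical points, namely acyclicity and the reachability precondition on $G$. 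What the paper's version buys is self-containedness: it anchors the hardness in one concrete cited theorem and performs the edge-to-vertex conversion inside the reduction, whereas your proof delegates that work to an unnamed ``hard acyclic vertex-disjoint variant'' (or to a preliminary line-digraph pass); that claim is classical and true, but to make your write-up airtight you should cite a specific NP-completeness result for $k$-vertex-disjoint paths on DAGs with $k$ part of the input, or carry out the line-digraph step explicitly as the paper does. What your version buys in exchange is a leaner gadget: no edge-vertices, no demand copies, and exactly one lifted edge per commodity, which makes the correspondence between the objective value and the number of routed commodities immediate.
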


\begin{restatable}{thm}{threesatreduction}
\label{thm:3sat-flow-reduction}
Lifted disjoint paths problem~\eqref{eq:lifted-disjoint-paths-problem} with positive lifted edges only is NP-hard.
\end{restatable}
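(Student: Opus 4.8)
The plan is to prove NP-hardness of the lifted disjoint paths problem restricted to \emph{positive} lifted edges by a polynomial-time reduction from 3-SAT. The high-level idea is that a positive lifted edge $c'_{vw} > 0$ penalizes the simultaneous connectivity of $v$ and $w$ through the flow; equivalently, it can be used to \emph{forbid} (at sufficiently high cost) that a single disjoint path passes through both $v$ and $w$. I would exploit this ``mutual exclusion'' capability to encode the logical structure of a 3-SAT formula. First I would take an arbitrary 3-SAT instance with variables $z_1,\dots,z_n$ and clauses $C_1,\dots,C_m$, and construct a directed acyclic flow network $G=(V,E)$ together with a lifted graph $G'$ whose solutions correspond to satisfying assignments.

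For the construction, I would build, for each variable $z_i$, a small gadget offering two parallel $s$--$t$-routable branches representing the literals $z_i$ and $\neg z_i$; selecting one unit of flow through a branch encodes setting that literal true. To force a consistent global truth assignment and to tie clause-satisfaction to literal selection, I would place nodes corresponding to literal occurrences inside clauses and route flow so that a path can be completed (i.e., a feasible low-cost disjoint-paths solution exists) if and only if every clause has at least one satisfied literal. The positive lifted edges would then be inserted between pairs of nodes whose simultaneous connection must be penalized — for instance, between the node for $z_i$ being true and the node for $\neg z_i$ being true — so that any flow realizing an inconsistent assignment incurs a large positive lifted cost. By choosing the lifted edge weights large relative to the base edge and node costs, I would ensure that the optimal objective value drops below a fixed threshold exactly when the formula is satisfiable, giving the desired reduction. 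I would then verify that $G$ and $G'$ are acyclic, that every node is reachable from $s$ and can reach $t$ (the standing assumptions of Section~\ref{sec:problem-formulation}), and that the encoding has size polynomial in $n+m$.

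The correctness argument splits into the two standard directions. In the forward direction, given a satisfying assignment, I would exhibit an explicit disjoint-paths solution together with the induced lifted variables $y'$ (determined by \eqref{eq:lifted-edge-def}) and check that its total cost meets the target bound, using the fact that no forbidden pair is simultaneously connected. In the reverse direction, given a solution of cost below the threshold, I would argue that the penalty structure forces each variable gadget to commit to exactly one literal and each clause gadget to be covered, and hence read off a satisfying assignment; the large penalty on the exclusion edges is what rules out ``cheating'' configurations.

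The main obstacle I anticipate is engineering the gadgets so that the node-disjointness requirement ($x\in\{0,1\}^V$) does not inadvertently over-constrain or under-constrain the encoding: because flow paths must be vertex-disjoint, shared clause-literal nodes must be duplicated or carefully routed so that the number of units of flow through the network is controlled, and so that a single assignment can simultaneously satisfy multiple clauses without a vertex-disjointness conflict. Getting the interplay between the flow-conservation constraints \eqref{eq:flow-conservation}, the binary vertex-disjointness, and the semantics of the positive lifted edges exactly right — so that feasibility plus the cost threshold is equivalent to satisfiability — is the delicate part; the remaining bookkeeping on costs and polynomial size is routine by comparison.
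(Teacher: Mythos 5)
Your high-level target (a polynomial reduction from 3-SAT in which positive lifted edges act as large penalties on a single path visiting two contradicting literal nodes) is the same as the paper's, but the gadget architecture you sketch has a genuine flaw, and the part you defer as ``the delicate part'' is exactly where the proof lives. By \eqref{eq:lifted-edge-def}, $y'_{vw}=1$ only when $w$ is reachable from $v$ along \emph{active flow edges}, i.e.\ when $v$ and $w$ lie on the \emph{same} active path in that order. Consequently, a lifted edge between the two parallel branches of a variable gadget, or between nodes sitting on two different vertex-disjoint paths, is always $0$ and imposes no penalty whatsoever. So the mutual-exclusion mechanism you propose --- ``a positive lifted edge between the node for $z_i$ being true and the node for $\neg z_i$ being true'' on parallel branches --- cannot fire, and a multi-path construction (one flow unit per variable or per clause) cannot be coupled by lifted edges at all; the only cross-path interaction available is vertex-disjointness, which you do not exploit. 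As written, the proposal neither gives a concrete construction nor resolves this obstruction, so it does not constitute a proof.

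The paper's construction avoids the issue entirely by using a \emph{single} $st$-path. The graph has one layer per clause, each layer containing three nodes labelled by that clause's literals; base edges of cost $-1$ join nodes in consecutive layers \emph{only if} their literals do not contradict; positive lifted edges of cost $k$ join contradicting literal nodes in non-adjacent layers; and $s$ (resp.\ $t$) is attached to the first (resp.\ last) layer. A single path then selects one literal per clause, adjacent-layer consistency is enforced by the missing base edges, non-adjacent consistency by the expensive lifted edges (which do fire, since both endpoints lie on the one active path), and the formula is satisfiable iff the optimum of \eqref{eq:lifted-disjoint-paths-problem} is at most $-(k-1)$. If you want to rescue your variable/clause-gadget design, you would have to route everything onto one path in a fixed order anyway --- at which point you have essentially rediscovered the layered construction.
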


 \section{Experiments}
\label{sec:experiments}

We conduct several experiments on MOT showing the merit of using lifted disjoint paths for the tracking problem.
Below, we describe our problem construction, cost learning for base and lifted edges, preprocessing and post-processing steps and report resulting performance. More details about our experiments are provided in Appendix, Section \ref{sec:appendix}.
\subsection{Graph Construction.}

\paragraph{Two-step procedure.}
Due to the computational complexity of the problem, we cannot solve entire video sequences straightforwardly. 
In order to make the problem tractable, we apply the following two-step procedure.
In the first step, the solver is applied on graphs over person detections but only for small time intervals consisting of a few dozen video frames.  
The tracks resulting from the first step are used for extracting tracklets. 
In the second step, the solver is applied on newly created graphs $G$ and $G'$ where vertices correspond to the obtained tracklets.
Edges and edge costs between tracklets are obtained by aggregating original edges resp.\ edge costs between person detections.
The tracks resulting from the second step may be suboptimal with respect to the original objective function defined over person detections.
Therefore, we identify points where splitting a track leads to an improvement of the original objective value and extract new tracklets from the divided tracks. 
Multiple iterations of the second step are performed until no improving split points are found in the output tracks.
This two-step procedure improves the objective w.r.t.\ the original objective~\eqref{eq:lifted-disjoint-paths-problem} in every iteration.
Since there are only finitely many trackings, the procedure terminates finitely.
In practice, only a few iterations are necessary.

\paragraph{Graph sparsification. }For our experiments, we use edges between detections up to $2 \mathrm{sec}$ temporal distance. These long range edges cause high computational complexity for the first step. In order to reduce it, we apply sparsification on both base and lifted graphs. For the base edges, we select for every $v \in V'$ its $K$ nearest (lowest-cost) neighbors from every subsequent time frame within an allowed time gap. Lifted edges with costs close to zero are not included, since they are not discriminative. Lifted edges connecting detections with high time gap are included more sparsely than lifted edges having lower time gaps. We use dense graphs in the second step. 

\paragraph{Costs.}
Initially, in the first step, we set $\omega_v=0$ for all vertices $v\in V$.
For the second step, where $V$ represents tracklets, $\omega_v$ is set to the cost of outputting tracklet $v$ as a final trajectory. 
Specifically, $\omega_v$ is the sum of costs of base edges between consecutive detections in the tracklet and the cost of lifted edges between all pairs of detections contained in the tracklet.
The cost of a base edge between two tracklets is given by the cost of the original base edge connecting the last detection in the first tracklet with the first detection in the subsequent tracklet.
The cost of a lifted edge between two tracklets is obtained by summing up the costs of original lifted edges between detections contained in the tracklets.
This ensures that the costs of the tracklet solution corresponds to the costs of the original problem. We set cost of all edges from the source node $s$ and to the sink node $t$ to zero. Setting of detection costs and in/out costs to zero reduces the number of hyperparameters that  usually needs to be incorporated by other methods. Moreover, our method does not include temporal decay of edge costs since the formulation directly prefers short range base edges over the long range ones.

\subsection{Preprocessing and Post-processing}
\label{sec:pre_post_processing}
As is common for tracking by detection, we perform pre- and  postprecessing to compensate for detector inaccuracies.

\textbf{Input filtering.} Given a set of input detections derived from a detector, we follow the approach of~\cite{bergmann2019tracking}, a leading tracker for the MOT challenge, to reject false positive detections and to correct misaligned ones.
For this, each input detection is send through the regression and classification part of their detector.
In more detail, all tracking parts involved in the tracker Tracktor \cite{bergmann2019tracking} are deactivated, such that it only reshapes and eventually rejects input  detections, without assigning labels to them.
Input detections are rejected if  Tracktor's detector outputs a confidence score $ \sigma_{\mathrm{active}} \leq 0.5$.

Tracktor also applies a non-maxima-surpression on the reshaped input detections, where we use the threshold $\lambda_{\mathrm{new}} = 0.6$.

\textbf{Inter- and extrapolation.} 
Even if all input detections have been assigned to the correct identities by our solver, there might still be  missing detections in case that a person has not been detected in some frames. We recover missing detections within the time range of a trajectory, which we denote as interpolation. Further, we extend a trajectory in forward and backward directions, which we denote as extrapolation.
To this end, we follow~\cite{bergmann2019tracking} and apply their object detector to recover missing positions based on the visual information at the last known position. 
Finally, for sequences filmed from a static camera, we perform linear interpolation on the remaining gaps. 
These sequences can be automatically detected using DeepMatching on the regions outside detection boxes.

To demonstrate the performance using traditional post-processing, we also evaluate our tracker using only linear interpolation as post-processing in all sequences.

\subsection{Cost Learning}
\label{sec:cost_learning}
Costs for base edges $E$ and lifted edges $E'$ are computed equally, since they both indicate whether two detections are from the same object or not.
For an edge $e=vw$, we denote with $d_{\mathrm{wi}}(v)$ the detection width corresponding to node $v$.

\paragraph{Visual cues.}
We exploit two different appearance features:
Given two detections, the \emph{re-identification} descriptor utilizes global appearance statistics, 
while the \emph{deep-matching} descriptor relies on fine-grained pixel-wise correspondences.

We employ the state-of-the-art re-identification network~\cite{zheng2019joint} and train it on MOT17 train set~\cite{MOT16} together with additional re-identification datasets~\cite{zheng2015scalable,wei2018person,ristani2016MTMC}. The obtained feature value $\mathfrak{f}_{\mathrm{re\mhyphen id}}(e) \in [-1,1]$ is modified in order to better reflect the uncertainty of a connection.
We truncate values smaller $0$ (corresponding to improbable connections) and re-scale the rest. First, we normalize scores between each detection $v$ and all detections in every time frame $V_j$ through the score of the most probable connecting edge $vw$. Second, all other connections than $vw$ are downscaled.

Our second visual cue utilizes DeepMatching (DM)~\cite{weinzaepfel:hal-00873592} to establishes pixelwise correspondences between two images. It thus serves as a reliable tracking feature \cite{tang2016multi,Henschel_2018_CVPR_Workshops,henschel2019multiple}.

We apply DM between boxes in two images and compute the DM intersection over union~\cite{tang2016multi,Henschel_2018_CVPR_Workshops}  w.r.t.\ the whole detection boxes and on five subboxes (left/right, upper/middle/lower part).
In addition,  we measure for all points in a given subbox whether their matched endpoints are in the corresponding subbox again or not. This gives two additional error measures for deviation in $x$ and $y$-directions.
Thus, in total we obtain a feature vector $\mathfrak{f}_{DM}(e) \in [0,1]^8$.
In order to assess the reliability of DM features, density of matching points is computed in each box and its subboxes. The smaller value is chosen for each box pair. This results in feature $\rho \in [0,1]^6$.

\paragraph{Motion constraints.}
We penalize for improbable motions by comparing the maximal displacement of DM endpoints within the sequence with the displacements of detection boxes. Assignment hypotheses of pairs of boxes representing improbable motions are penalized with a large cost.

\paragraph{Spatio-temporal cues.}
Our spatio-temporal cues utilize a simple motion compensation by computing the median DM displacement between correspondences of the background. 

We assume a linear motion model, similar to~\cite{ristani2018features} and penalize deviations of detections from the estimated motion trajectory.
This enforces  spatio-temporally consistency of detections within one trajectory.
Furthermore, we penalize improbable large person movements by relating velocities (in pixels per seconds) in horizontal direction to box width:  $\mathfrak{f}_{\mathrm{trans}}(e)= \log({\mathfrak{v}_{x}(e)}/\min\{d_{\mathrm{wi}}(v),d_{\mathrm{wi}}(w)\})$.

\paragraph{Fusion of input features.}
We construct a neural network consisting of fully connected layers, batch normalization and relu units taking the above described features and time differences as input and outputting scores for assignment hypotheses.
The final layer uses a sigmoid activation function for producing a score in $[0,1]$.
We refer to the supplemental material for the exact structure of the neural network and details about the training procedure. 

\subsection{Experiment Setup}
In order to assess the suitability of the proposed lifted disjoint paths formulation for MOT, we conduct extensive experiments on three challenging benchmarks: MOT15~\cite{MOTChallenge2015}, MOT16 and MOT17~\cite{MOT16}, resulting in 39 test sequences. The sequences are filmed from static and moving cameras. While MOT16 and MOT17 share the same sequences, MOT17 provides three different detectors in order to study the dependence of the tracking quality on the input detections. 
We perform analysis and parameter tuning for our tracker on the MOT17 train set, even when our tracker is applied to the MOT15 sequences to ensure that our tracker is not prone to overfitting. 
We follow the MOT challenge protocol and use the detections provided by the respective benchmarks. 
All experiments on the training set are evaluated using a leave-one-out cross-validation. This includes all of our training procedures,  in particular also the training of the re-identification  network. 

To measure the tracking quality, the multiple object tracking accuracy (MOTA) \cite{bernardin2008evaluating} and the IDF1 metric \cite{ristani2016MTMC} are regarded as the most meaningful ones. The first incorporates the number of false negatives (FN), false positives (FP) and identity switches (IDS), thereby focusing on the coverage of persons. The latter assesses the consistency w.r.t.\ identities. Further tracking metrics (MT, ML) are defined in \cite{li2009learning}.

\subsection{Benefit of Long Range Edges}
\label{sec:exp_long_range_edges}
We investigate the importance of using long range information for MOT. To this end, we apply our proposed tracker on the MOT17 training sequence with varying maximal time gap, for which base and lifted edges are created between nodes. In order to assess the influence of the time gap on the tracking quality, we measure the \textit{assignment} quality in terms of the MOTA and IDF1 metrics, without performing any inter- or extrapolation.
To assess how well the \textit{assignment part} is solved by our tracker, we compute the maximum achievable metrics given the filtered input detections and admissible assignment hypotheses within maximal time gaps. A detailed description of how we obtain the optimal assignments are given in the appendix in Section \ref{sec:optimal_assignment_definition}.
From the result in Table~\ref{tab:tracking_performance_over_time}, we see essentialy constant MOTA scores. 
This is due the fact that selecting correct connections does not change MOTA significantly except after inter- and extrapolation (which we have excluded in Table~\ref{tab:tracking_performance_over_time}). 
However, we see a significant improvement in the IDF1 score, which directly penalizes wrong connections. Here, long range edges help greatly.
Moreover, both metrics, ID precision and ID recall, clearly increase with increasing time gap. This shows that improvements by incorporating more temporal information come from using longer skip edges (impact on IDR) but most importantly, precision increases greatly. This means that ID switches are avoided thanks to lifted edges.
Furthermore, the experiment shows that our designed features together with the lifted disjoint paths formulation \eqref{eq:lifted-disjoint-paths-problem} are well-suited for the MOT problem delivering nearly optimal assignments.

\begin{table}[hbt]

    \centering
    \tabcolsep=0.09cm
\begin{tabular}{lcccccc}
\toprule  &  $0.3\mathrm{s}$ & $0.5\mathrm{s}$ & $1\mathrm{s}$ & $1.5\mathrm{s}$ & $2\mathrm{s}$ & $\infty$ \\ \hline
MOTA (ours)$\uparrow$ & $52.6$ &  $52.7$ &  $52.8$ &  $52.8$ &  $\mathbf{52.8}$ & - \\
MOTA (optimal)$\uparrow$ & $53.0$&  $53.1$&  $53.3$&  $53.3$&  $\mathbf{53.4}$ & $\mathbf{53.4}$\\ \hline 
IDF1 (ours) $\uparrow$ & $55.7$ &  $57.8$ &  $61.8$ &  $63.8$ &  $\mathbf{64.3}$ & - \\
IDF1 (optimal)$\uparrow$ & $56.0$&  $58.6$&  $63.2$& $65.7$&  $66.8$ & $\mathbf{69.9}$\\ \hline
IDP (ours) $\uparrow$ & $79.8$ &  $82.9$ &  $88.5$ &  $91.4$ &  $\mathbf{92.1}$ & - \\
IDP (optimal)$\uparrow$ & $80.4$&  $84.2$&  $90.8$& $94.3$&  $ 95.9$ & $\mathbf{100.0}$  \\ \hline
IDR (ours) $\uparrow$ & $42.7$ &  $44.5$ &  $47.4$ &  $49.0$ &  $\mathbf{49.4}$ & - \\
IDR (optimal)$\uparrow$ & $42.9$&  $45.0$&  $48.5$& $50.4$&  $51.3$ & $\mathbf{53.4}$\\ \bottomrule
\end{tabular}   
\caption{Assignment quality of our solver without interpolation or extrapolation on the MOT17 train set with different maximal time gaps in seconds. Rows 1,3,5 and 7 show the results by our solver, rows 2,4,6 and 8 show the maximally achievable bounds with admissible assignment hypotheses up to the specified time gap. 
Bold numbers represent the best values per row.}
    \label{tab:tracking_performance_over_time}

\end{table}{}

\begin{table*}
\center

\small{

    \begin{tabular}{c l c c c c c c c c c}
     \toprule
     & Method  & MOTA$\uparrow$  & IDF1$\uparrow$  & MT$\uparrow$  & ML$\downarrow$  & FP$\downarrow$ & FN$\downarrow$ & IDS$\downarrow$ & Frag$\downarrow$ \\   
     \toprule
     \parbox[t]{3mm}{\multirow{5}{*}{\rotatebox[origin=c]{90}{MOT17}}} &
 Lif\_T (ours) &  $\mathbf{60.5}$ &$ \mathbf{65.6} $& $27.0 $ & $33.6$  & $14966$ & $\mathbf{206619}$ & $1189 $& $3476$  \\ 
 & Lif\_TsimInt (ours) &  $58.2$ &$ 65.2 $& $\mathbf{28.6} $ & $33.6$  & $16850$ & $217944$ & $\mathbf{1022}$& $\mathbf{2062}$  \\ 
& Tracktor17&  $53.5$ & $52.3$ & $19.5$ & $36.6 $ & $\mathbf{12201}$ & $248047$ & $2072$ & $4611$  \\ 
& JBNOT~ &  $52.6$ & $50.8$ & $19.7$ & $35.8$ & $31572$ & $232659$ & $3050$ & $3792$  \\ 
& FAMNet&  $52.0$ & $48.7$ & $19.1$ & $\mathbf{33.4}$  & $14138$ & $253616$ & $3072$ & $5318$  \\ 
& eTC17& $51.9$ & $58.1$ & $23.1$  & $35.5$  & $36164$ & $232783$ & $2288$ & $3071$  \\ 
& eHAF17&  $51.8$ & $54.7$ & $23.4$  & $37.9$  & $33212$ & $236772$ & $1834$ & $2739 $  \\

     \midrule
     
     \parbox[t]{3mm}{\multirow{5}{*}{\rotatebox[origin=c]{90}{MOT16}}} &
     Lif\_T (ours) &  $\mathbf{61.3} $& $\mathbf{64.7}$ & $\mathbf{27.0}$ & $34.0$ & $4844$ & $\mathbf{65401}$ & $389$ & $1034$  \\ 
     & Lif\_TsimInt (ours) &  $57.5$ &$ 64.1 $& $25.4 $ & $\mathbf{34.7}$  & $4249$ & $72868$ & $\mathbf{335}$& $604$  \\
& Tracktor16 &  $54.4$ & $52.5$ & $19.0$ & $36.9$ & $\mathbf{3280}$ & $79149$ & $682$ & $1480$  \\ 
& NOTA &  $49.8$ & $55.3$ & $17.9$ & $37.7$ & $7248$ & $83614$ & $614$ & $1372$  \\ 
& HCC &  $49.3$ & $50.7$ & $17.8$ & $39.9$ & $5333$ & $86795$ & $391$ & $\mathbf{535}$  \\ 
& eTC&  $49.2$ & $56.1$ & $17.3$ & $40.3$ & $8400$ & $83702$ & $606$ & $882$  \\ 
& KCF16  &  $48.8$ & $47.2$ & $15.8$ & $38.1$ & $5875$ & $86567$ & $906$ & $1116$  \\
     \midrule
     
     \parbox[t]{3mm}{\multirow{5}{*}{\rotatebox[origin=c]{90}{2D MOT15}}} &
    Lif\_T (ours) &  $\mathbf{52.5}$ & $\mathbf{60.0}$ & $\mathbf{33.8} $ & $\mathbf{25.8}$ & $6837$ & $\mathbf{21610}$ & $730$ & $1047 $ \\ 
    & Lif\_TsimInt (ours) &  $47.2$ & $57.6$ & $27.0 $ & $29.8$ & $7635$ & $24277$ & $\mathbf{554}$ & $\mathbf{803} $ \\ 
& Tracktor15 & $ 44.1 $& $46.7$ & $18.0$ & $26.2$ & $6477$ & $26577$ & $1318$ & $1790$  \\ 
& KCF  & $38.9$ & $44.5$ & $16.6$ & $31.5$ & $7321$ & $29501$ & $720$ & $1440$  \\ 
& AP\_HWDPL\_p &  $38.5$ & $47.1$ & $8.7$ & $37.4$  & $\mathbf{4005}$ & $33203$ & $586$ & $1263$  \\ 
& STRN &  $38.1$ & $46.6$ & $11.5$ & $33.4$  & $5451$ & $31571$ & $1033$ & $2665$  \\ 
& AMIR15 &  $37.6$ & $46.0$ & $15.8$ & $26.8$ & $7933$ & $29397$ & $1026$ & $2024$    \\
     \bottomrule
    \end{tabular}

\caption{We compare our tracker Lif\_T with the five best performing competing solvers w.r.t.\ MOTA from the MOT challenge. 
Tracktor~\cite{bergmann2019tracking}, 
JBNOT~\cite{henschel2019multiple},
FAMNet~\cite{chu2019famnet},
eTC~\cite{wang2019exploit},
eHAF~\cite{sheng2018heterogeneous},
NOTA~\cite{chen2019aggregate},
HCC~\cite{ma2018customized},
KCF~\cite{chu2019online},
AP\_HWDPL\_p~\cite{chen2017online},
STRN~\cite{xu2019spatial} and
AMIR15~\cite{sadeghian2017tracking}.
In addition, we compare the results to our tracker Lif\_TsimInt that uses only a simple interpolation method (linear interpolation) as post-processing in all sequences.
We outperform competing solvers on most metrics on all three MOT Challenge benchmarks, using Lif\_T and Lif\_TsimInt.  
Arrows indicate whether low or high metric values are better.}
\vspace{-0.2cm}
\label{tab:mot}
}
\end{table*}

\subsection{Benchmark Evaluations}
Finally, we compare our tracking performance on the MOT15, MOT16 and MOT17 benchmarks with all trackers listed on the MOTChallenge which have been peer-reviewed and correspond to published work. 
The three benchmark datasets consist of 11/7/7 training and test sequences for MOT15/16/17 respectively.
They are the standard benchmark datasets for MOT.
The results in Table~\ref{tab:mot} show the tracking performance of our tracker together with the best 5 performing trackers, accumulated over all sequences of the respective benchmarks. 
The evaluations show that we outperform all tracking systems by a large margin on all considered benchmarks. On MOT17, we improve the MOTA score from 53.5 to 60.5 and the IDF1 score from  52.3 to 65.6, which corresponds to an improvement of 13\% in terms of MOTA and almost 25\% in terms of the IDF1 score, indicating the effectiveness of the lifted edges. We observe similar improvements across all three benchmarks. These results reflect the near-optimal assignment performance observed on the MOT17 train set in Sect.~\ref{sec:exp_long_range_edges}. 
Finally, using only simple linear interpolation as post-processing (Lif\_TsimInt), our tracker achieves  $58.2$ MOTA and $65.2$ IDF1. Even then, our system clearly outperforms existing tracking systems. On average, the ILP solver needs $26.6$ min.\ per sequence. Detailed runtimes are available in Table 5 in Appendix.

\section{Conclusion}
\label{sec:conclusion}
We have shown that for the MOT challenge datasets we reach nearly optimal data association performance.
We conjecture that further improvements would have to come from better detectors, better inter- and extrapolation and more powerful solvers for our formulation to take into account even longer time-gaps.
Our polyhedral work offers the basis for writing such more powerful solvers.

\section{Acknowledgement}
\label{sec:acknowledgement}
This work was funded by the Deutsche Forschungsgemeinschaft (DFG, German Research Foundation) under Germany's Excellence Strategy within the Cluster of Excellence PhoenixD (EXC 2122). We thank Laura Leal-Taix{\'e} for initiating the collaboration. We thank all reviewers for their valuable comments.
 
\clearpage 
\newpage
\bibliography{literature}
\bibliographystyle{icml2020}

\setcounter{@affiliationcounter}{2}

\clearpage
\newpage
\twocolumn[
\icmltitle{Lifted Disjoint Paths with Application in Multiple Object Tracking \\ \normalfont Appendix}



\icmlsetsymbol{equal}{*}

\begin{icmlauthorlist}
\icmlauthor{Andrea Hornakova}{equal,mpi}
\icmlauthor{Roberto Henschel}{equal,tnt}
\icmlauthor{Bodo Rosenhahn}{tnt}
\icmlauthor{Paul Swoboda}{mpi}
\end{icmlauthorlist}

\icmlaffiliation{mpi}{Computer Vision and Machine Learning, Max Planck Institute for Informatics, Saarbr\"ucken, Saarland, Germany}


\icmlkeywords{combinatorial optimization, disjoint paths, network flow, polyhedral relaxation, multiple object tracking}

\vskip 0.3in
]
\printAffiliationsAndNotice{\icmlEqualContribution} %

\begin{abstract}
This appendix supplements our work by presenting missing proofs regarding the solver and details about our tracker. 

Sections \ref{sec:appendix-constraints} up to Section \ref{sec:appendix-complexity} provide proofs used in Sections \ref{sec:constraints} and \ref{sec:complexity}. 

Section \ref{sec:optimal_assignment_definition} provides further information how  the optimal assignments used in Section \ref{sec:exp_long_range_edges} were obtained. The impact of the employed post-processing used in our tracker is analyzed in Section \ref{sec:ablation_postprocessing}. Details about the used fusion network are given in Section \ref{sec:fusion_network_details}. Finally, evaluation metrics for all tracked sequences are provided in Section \ref{sec:results_all_sequences}. 
\end{abstract}

\section{Appendix}
\subsection{Proofs for Section~\ref{sec:constraints}}
\label{sec:appendix-constraints}

\begin{restatable}{prop}{multicutpathineqweaker}
\label{prop:multicut-path-inequalities weaker}
Path inequalities \eqref{eq:path-inequalities} define a strictly tighter relaxation of the lifted disjoint path problem than the lifted multicut path inequalities
\begin{align}
\nonumber \forall vw\in E'\ \forall P\in &vw\mhyphen\text{paths}(G):\\
\label{eq:multicut-path}y'_{vw}&\geq \sum_{ij\in P_E}(y_{ij}-1)+1\,.
\end{align}
\end{restatable}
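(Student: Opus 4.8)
The plan is to establish the set inclusion between the two relaxations and then verify that it is strict. Write $\mathcal{P}_{\text{path}}$ and $\mathcal{P}_{\text{mc}}$ for the polytopes obtained by adjoining the path inequalities \eqref{eq:path-inequalities} resp.\ the multicut path inequalities \eqref{eq:multicut-path} to the flow-conservation constraints \eqref{eq:flow-conservation} and the box constraints $0\le y \le 1$. I want to show $\mathcal{P}_{\text{path}} \subsetneq \mathcal{P}_{\text{mc}}$. For the inclusion it suffices to fix a lifted edge $vw \in E'$ and a single path $P = (p_0 p_1,\ldots,p_{m-1}p_m)$ with $p_0 = v$, $p_m = w$, and to prove that the right-hand side of \eqref{eq:path-inequalities} dominates (is at least as large as) the right-hand side of \eqref{eq:multicut-path} on $\mathcal{P}_{\text{path}}$. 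Since the path inequality then forces $y'_{vw}$ to be at least the larger quantity, it is in particular at least the smaller one, so every point of $\mathcal{P}_{\text{path}}$ satisfies \eqref{eq:multicut-path}; running this over all pairs $(vw,P)$ gives the inclusion.

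The heart of the argument is an elementary rearrangement combined with flow conservation, which I would carry out as follows. First I would drop from the first sum of \eqref{eq:path-inequalities} all nonnegative ``chord'' terms $y_{vp_j}$ with $j \ge 2$, retaining only $y_{p_0 p_1}$; this only decreases the path bound, hence is safe. Cancelling $y_{p_0 p_1}$ against the telescoping term $\sum_{i=0}^{m-1} y_{p_i p_{i+1}}$ on the multicut side reduces the desired inequality to
\[
\sum_{i=1}^{m-1}\Big( y_{p_i p_{i+1}} + \sum_{k \notin P_V} y_{p_i k}\Big) \leq m-1 \,.
\]
Here the key observation is that, for each internal path vertex $p_i$, the edge $p_i p_{i+1}$ (with $p_{i+1}\in P_V$) and the edges $p_i k$ (with $k\notin P_V$) are all \emph{distinct} out-edges of $p_i$, so flow conservation \eqref{eq:flow-conservation} bounds the bracketed expression by $x_{p_i}\le 1$; summing the $m-1$ resulting inequalities yields the bound above.

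For strictness I would exhibit an explicit fractional point. On the graph $s\to v\to p_1\to w\to t$ augmented with a branch $p_1\to k\to t$ and the single lifted edge $vw$, set the flow to $1/2$ along $s,v,p_1,w,t$ and to $0$ on the branch through $k$, and put $y'_{vw}=0$. Flow conservation and the box constraints clearly hold, and since $v\to p_1\to w$ is the only $vw$-path, the unique multicut inequality reads $y'_{vw}\ge y_{vp_1}+y_{p_1 w}-1 = 0$, which is satisfied; yet the path inequality for the same path gives $y'_{vw}\ge y_{vp_1}-y_{p_1 k}=1/2$, which is violated. Hence this point lies in $\mathcal{P}_{\text{mc}}\setminus\mathcal{P}_{\text{path}}$.

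I expect the inclusion direction to be the main obstacle, specifically making the per-vertex flow-conservation bound interact correctly with the telescoping on the multicut side. The only genuinely delicate point is the bookkeeping of chord edges (edges from $v$ to non-adjacent path vertices, or between non-consecutive path vertices): such edges either enter the first sum of \eqref{eq:path-inequalities} with a positive sign or, because the second sum ranges only over $k\notin P_V$, do not appear at all, so in every case they work in favour of the inequality and can simply be discarded.
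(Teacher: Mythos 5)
Your proof is correct and follows essentially the same route as the paper's: the inclusion is obtained by using flow conservation to bound the total out-flow of each internal path vertex by $x_{p_i}\le 1$ (the paper substitutes $\sum_{k\notin P_V}y_{ik}=x_i-\sum_{j\in P_V}y_{ij}$ and then discards the nonnegative chord terms, which is the same rearrangement you perform in the opposite order), and strictness is shown by exhibiting an explicit fractional point satisfying \eqref{eq:multicut-path} but violating \eqref{eq:path-inequalities}. Your counterexample is smaller than the paper's (Figure~\ref{fig:multicut-path}), but it is valid and serves the same purpose.
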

\begin{proof}
Let us define the following sets:
\begin{align*}
S_B&=\{(y,y')\in [0,1]^{E}\times [0,1]^{E'}|(y,y') \text{ satisfy } \eqref{eq:path-inequalities}\}\,,\\
S_M&=\{(y,y')\in [0,1]^{E}\times [0,1]^{E'}|(y,y') \text{ satisfy } \eqref{eq:multicut-path} \}  \,. 
\end{align*}
\begin{itemize}
    \item Let us prove that $S_B\subset S_M$
    
    Let us rewrite the right hand side of \eqref{eq:path-inequalities} for a path $P\in vw$-paths$(G)$:
 \begin{align}
\nonumber y'_{vw}&\geq\sum_{vj:j\in P_V}y_{vj}- \sum_{i\in P_V\setminus \{v,w\}}\sum_{k\notin P_V}y_{ik} = \\ 
\nonumber&= \sum_{vj:j\in P_V}y_{vj}- \sum_{i\in P_V\setminus \{v,w\}}(x_i-\sum_{j\in P_V}y_{ij})=\\
\nonumber&=\sum_{i\in P_V\setminus w}\sum_{j\in P_V}y_{ij}-\sum_{i\in P_V\setminus \{v,w\}}x_i\geq\\
\nonumber&\geq \sum_{ij\in P_E}y_{ij}-\sum_{i\in P_V\setminus \{v,w\}}1=\\
&= \sum_{ij\in P_E}(y_{ij}-1)+1\,.
 \end{align}
\item Let us prove that  $S_B\subsetneq S_M$

We prove that there exists $(y,y')\in [0,1]^{E}\times [0,1]^{E'}$ such that $(y,y')$ satisfies \eqref{eq:multicut-path} and does not satisfy \eqref{eq:path-inequalities}. 
An example is given in Figure \ref{fig:multicut-path}.
There are four possible paths from $v$ to $w$.
If we use Constraints \eqref{eq:multicut-path}, the highest lower bound on $y'_{vw}$ is given by path $P=(vv_2,v_2v_4,v_4w)$ and it is as follows:
\begin{align*}
y'_{vw}\geq (0.5-1)+(0.5-1)+(1-1)+1=0\ .
\end{align*}
Let us apply Constraint \eqref{eq:path-inequalities} using path $P=(vv_1,v_1v_2,v_2v_3,v_3v_4,v_4w)$. We obtain the following threshold on $y'_{vw}$
\begin{align*}
y'_{vw}\geq 0.5+0.5-0-0=1\,.
\end{align*}
\end{itemize}
\end{proof}

\begin{figure}
\begin{center}
\begin{tikzpicture}

\node[style=vertex] (1) at (1, 0) {$v$};
\node[style=vertex] (2) at (2, 0) {$v_1$};
\node[style=vertex] (3) at (3, 0) {$v_2$};
\node[style=vertex] (4) at (4, 0) {$v_3$};
\node[style=vertex] (5) at (5, 0) {$v_4$};
\node[style=vertex] (6) at (6, 0) {$w$};

\node[style=vertex] (7) at (3, -0.7) {$v_5$};
\node[style=vertex] (8) at (6, -0.7) {$v_6$};

\draw[base-edge] (1) edge node[midway,above]{\footnotesize{0.5}} (2);
\draw[base-edge] (2) edge node[midway,above]{\footnotesize{0.5}}(3);
\draw[base-edge] (3) edge node[midway,above]{\footnotesize{0.5}}(4);
\draw[base-edge] (4) edge node[midway,above]{\footnotesize{0.5}}(5);
\draw[base-edge] (5) edge node[midway,above]{\footnotesize{1}}(6);

\draw[base-edge] (2) edge node[pos=0.3,below]{\footnotesize{0}}(7);
\draw[base-edge] (5) edge node[pos=0.3,below]{\footnotesize{0}}(8);

\draw[base-edge] (1) edge [bend left=50] node[midway,above]{\footnotesize{0.5}} (3);
\draw[base-edge] (3) edge [bend left=50] node[midway,above]{\footnotesize{0.5}} (5);

\draw[lifted-edge] (1) edge [bend left=60] node[midway,above]{\footnotesize{?}} (6);

\end{tikzpicture}
\caption{Failure case for lifted multicut path inequality~\eqref{eq:multicut-path}. The path inequality~\eqref{eq:path-inequalities} gives the correct lower bound for lifted edge $y'_{vw}$ in this case. Example for Proposition \ref{prop:multicut-path-inequalities weaker}.}
\label{fig:multicut-path}
\end{center}
\end{figure}
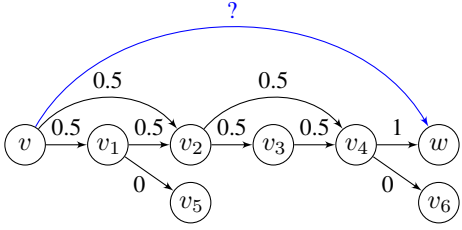

\liftedstrictlybetter*

\begin{proof}
Let us define the following sets
\begin{align*}
S_B&=\{(y,y')\in [0,1]^{E}\times [0,1]^{E'}|(y,y') \text{ satisfy } \eqref{eq:path-inequalities}\}\,,\\
S_L&=\{(y,y')\in [0,1]^{E}\times [0,1]^{E'}|(y,y') \text{ satisfy } \eqref{eq:lifted-path-inequalities}\}\,.    
\end{align*}
\begin{itemize}
\item Let us prove that $S_{L}\subset S_B$:\\
Note that every path $P\in vw\mhyphen\text{paths}(G)$ belongs to the set of $vw\mhyphen\text{paths}(G\cup G')$ too. It just holds that $P_{E'}=\emptyset$. 
Let us rewrite the right hands side of the inequality from~\eqref{eq:lifted-path-inequalities} for such  $P\in vw\mhyphen\text{path}(G\cup G')$ where $P_{E'}=\emptyset$. 
\begin{align*}
\nonumber y'_{vw}\geq&\sum_{vj:j\in P_V}y_{vj} - \sum_{i\in P_V\setminus \{v,w\}}\sum_{k\notin P_V} y_{ik}\\
& + \sum_{ij\in P_{E'}} y'_{ij}-\sum_{ij\in P_{E'}\cap E} y_{ij}=\\
=&\nonumber \sum_{vj: j\in P_V}y_{vj} - \sum_{i\in P_V\setminus \{v,w\}}\sum_{k\notin P_V} y_{ik}\,.
\end{align*}
Which is exactly the right hand side of~\eqref{eq:path-inequalities}. Therefore, any pair of real vectors $(y,y')\in [0,1]^{E}\times [0,1]^{E'}$ that satisfies~\eqref{eq:lifted-path-inequalities} must satisfy~\eqref{eq:path-inequalities} as well.

\item Let us prove that $S_L\subsetneq S_B$:\\
We prove that there exists $(y,y')\in [0,1]^{E}\times [0,1]^{E'}$ such that $(y,y')$ satisfies~\eqref{eq:path-inequalities} and does not satisfy~\eqref{eq:lifted-path-inequalities}. See the graph in Figure~\ref{fig:lifted-path-proof}. There are four possible paths from $v$ to $w$ in $G$. If we use Constraints~\eqref{eq:path-inequalities}, all the paths give us the same lower bound on $y'_{vw}$
\begin{align*}
y'_{vw}\geq 1-0.5-0.5=0\ .
\end{align*}
\end{itemize}
If we use Constraints \eqref{eq:lifted-path-inequalities} with path $P=(vv_1,v_1v_4,v_4w)$ where $P_{E'}=\{v_1v_4,v_4w\}$, we obtain
\begin{align*}
y'_{vw}\geq 1-0.5-0.5-0.5-0.5+1+1=1\ .
\end{align*}
\end{proof}

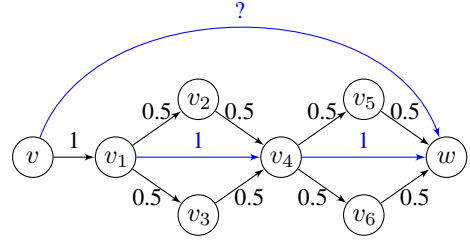
\begin{figure}
\begin{center}
\begin{tikzpicture}

\node[style=vertex] (2) at (1, 0) {$v$};
\node[style=vertex] (3) at (2, 0) {$v_1$};
\node[style=vertex] (4) at (3, 0.7) {$v_2$};
\node[style=vertex] (5) at (3, -0.7) {$v_3$};
\node[style=vertex] (6) at (4, 0) {$v_4$};
\node[style=vertex] (7) at (5, 0.7) {$v_5$};
\node[style=vertex] (8) at (5, -0.7) {$v_6$};
\node[style=vertex] (9) at (6, 0) {$w$};
\draw[base-edge] (2) -- (3) node[midway,above]{\footnotesize{1}} ;
\draw[base-edge] (3) -- (4)node[midway,above]{\footnotesize{0.5}} ;
\draw[base-edge] (3) -- (5)node[pos=0.3,below]{\footnotesize{0.5}} ;
\draw[base-edge] (4) -- (6)node[midway,above]{\footnotesize{0.5}} ;
\draw[base-edge] (5) -- (6)node[pos=0.7,below]{\footnotesize{0.5}} ;
\draw[lifted-edge] (3) -- (6)node[midway,above]{\footnotesize{1}} ;
\draw[base-edge] (6) -- (7)node[midway,above]{\footnotesize{0.5}} ;
\draw[base-edge] (6) -- (8)node[pos=0.3,below]{\footnotesize{0.5}} ;
\draw[base-edge] (7) -- (9)node[midway,above]{\footnotesize{0.5}} ;
\draw[base-edge] (8) -- (9)node[pos=0.7,below]{\footnotesize{0.5}} ;
\draw[lifted-edge] (6) -- (9)node[midway,above]{\footnotesize{1}} ;
\draw[lifted-edge] (2) edge [bend left=70] node[midway,above]{\footnotesize{?}} (9);

\end{tikzpicture}
\caption{Exemplary case where the path inequalities~\eqref{eq:path-inequalities} give a trivial lower bound on lifted edge $y'_{vw}$.
The lifted path inequality~\eqref{eq:lifted-path-inequalities} gives the correct lower bound. Example for Proposition~\ref{prop:lifted-path-strictly-better}.
}
\label{fig:lifted-path-proof}
\end{center}
\end{figure}

\liftedpathinducedcutbetter*

\begin{proof}
Let us define the following sets
\begin{align*}
S_B&=\{(y,y')\in [0,1]^{E}\times [0,1]^{E'}|(y,y') \text{ satisfy } \eqref{eq:path-induced-cut-inequality}\}\,,\\
S_{L1}&=\{(y,y')\in [0,1]^{E}\times [0,1]^{E'}|(y,y') \text{ satisfy } \eqref{eq:lifted-path-induced-cut-inequality}\}\,,\\
S_{L2}&=\{(y,y')\in [0,1]^{E}\times [0,1]^{E'}|(y,y') \text{ satisfy } \eqref{eq:lifted-path-induced-cut-inequality2}\}\,.
\end{align*}

\begin{itemize}
\item First, we prove $S_{L1}\subset S_B$:\\
We use the same argument as in the proof of Proposition~\ref{prop:lifted-path-strictly-better}. Every path $P\in vw\mhyphen\text{paths}(G)$ belongs to the set of $vw\mhyphen\text{paths}(G\cup G')$ and it holds that $P_{E'}=\emptyset$. 
Let us rewrite the right hands side of the inequality from~\eqref{eq:lifted-path-induced-cut-inequality} for such  $P\in vw\mhyphen\text{path}(G\cup G')$ where $P_{E'}=\emptyset$. 
\begin{align*}
y'_{vw}\leq&\sum_{i\in P_V}\sum_{\substack{k\notin P_V\\ kw \in \mathcal{R}}}y_{ik}-\sum_{ij\in P_{E'}}y'_{ij}
 +\sum_{ij\in P_{E'}\cap E}y_{ij}= \\
=&\sum_{i\in P_V}\sum_{\substack{k\notin P_V\\ kw \in \mathcal{R}}}y_{ik}\,.
\end{align*}
Which is exactly the right hand side of \eqref{eq:path-induced-cut-inequality}. Therefore, any pair of real vectors $(y,y')\in [0,1]^{E}\times [0,1]^{E'}$ that satisfies \eqref{eq:lifted-path-induced-cut-inequality} must satisfy \eqref{eq:path-induced-cut-inequality}.\\

\item Let us prove $S_{L1}\subsetneq S_B$:\\
We prove that there exists $(y,y')\in [0,1]^{E}\times [0,1]^{E'}$ such that $(y,y')$ satisfies \eqref{eq:path-induced-cut-inequality} and does not satisfy \eqref{eq:lifted-path-induced-cut-inequality}.

See the example in Figure \ref{fig:lifted-cut-proof}. There are four possible paths in $G$ from $v$ to either $u_1$ or $u_2$. They are $P_1=(vv_3,v_3u_1)$, $P_2=(vv_2,v_2u_1)$, $P_3=(vv_3,v_3u_2)$, $P_4=(vv_2,v_2u_2)$. Using~\eqref{eq:lifted-path-induced-cut-inequality}, all of them give us the same threshold on $y'_{vw}$:
\begin{align*}
y'_{vw}\leq 0.5+0.5+0=1\,.      
\end{align*}
If we use Constraint \eqref{eq:lifted-path-induced-cut-inequality} with path $P=(vu_1)$, we obtain the following threshold:
\begin{align*}
y'_{vw}\leq 0.5+0.5+0-1=0\,.      
\end{align*}

\item Let us prove that $S_{L1}\cap S_{L2}\subsetneq S_{L1}$\\
It holds trivially that $S_{L1}\cap S_{L2}\subset S_{L1}$. Let us prove that there exists  $(y,y')\in [0,1]^{E}\times [0,1]^{E'}$ such that $(y,y')\in S_{L1}$ and $(y,y')\notin S_{L1}\cap S_{L2}$.

See the example graph in Figure~\ref{fig:lifted-cut-proof2}. Similarly as in Figure~\ref{fig:lifted-cut-proof}, there are  four possible paths from $v$ to either $u_1$ or $u_2$ in $G$. There are no active lifted edges that would enable us to obtain a better upper bound on $y'_{vw}$ using~\eqref{eq:lifted-path-induced-cut-inequality} than the following:
\begin{align*}
y'_{vw}\leq 1  \,.      
\end{align*}
However, if we use Constraints \eqref{eq:lifted-path-induced-cut-inequality2} with path $P=(vv_3)$ and $y'_{v_3w}=0$, we obtain
\begin{align*}
y'_{vw}\leq 0\,.    
\end{align*}
\end{itemize}
\end{proof}

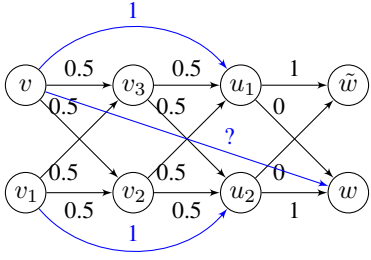
\begin{figure}
\begin{center}
\begin{tikzpicture}[scale=1.3]

\node[style=vertex] (10) at (1, 0) {$v_1$};
\node[style=vertex](20) at (2, 0) {$v_2$};
\node[style=vertex]  (30) at (3, 0) {$u_2$};
\node[style=vertex] (40)  at (4, 0) {${w}$};

\node[style=vertex] (11) at (1, 1) {${v}$};
\node[style=vertex] (21) at (2,1) {$v_3$};
\node[style=vertex] (31) at (3, 1) {$u_1$};
\node[style=vertex] (41) at (4, 1) {$\tilde{w}$};

\draw[base-edge] (10) -- (20)node[midway,below] {\footnotesize{0.5}};
\draw[base-edge] (20) -- (30)node[midway,below] {\footnotesize{0.5}};
\draw [base-edge] (21) -- (30)node[pos=0.3,above]  {\footnotesize{0.5}};
\draw [base-edge](11) -- (21)node[pos=0.5,above]  {\footnotesize{0.5}};
\draw [base-edge](21) -- (31)node[pos=0.5,above]  {\footnotesize{0.5}};
\draw [base-edge](31) -- (41)node[pos=0.5,above]  {\footnotesize{1}};
\draw [base-edge](31) -- (40)node[pos=0.3,above]  {\footnotesize{0}};
\draw [base-edge](20) -- (31)node[pos=0.3,below]  {\footnotesize{0.5}};
\draw[base-edge] (30) -- (41)node[pos=0.3,below]  {\footnotesize{0}};
\draw[base-edge] (11) -- (20)node[pos=0.3,above]  {\footnotesize{0.5}};
\draw[base-edge] (10) -- (21)node[pos=0.3,below]  {\footnotesize{0.5}};
\draw[base-edge] (30) -- (40)node[midway,below]  {\footnotesize{1}};

\draw[lifted-edge] (11) edge [bend left=50] node[midway,above]{\footnotesize{1}}(31);
\draw[lifted-edge] (10) edge [bend right=50] node[midway,above]{\footnotesize{1}}(30);

\draw[lifted-edge] (11) edge node[pos=0.65,above]{\footnotesize{?}} (40);

\end{tikzpicture}
\caption{
Exemplary case where the path-induced cut inequalities~\eqref{eq:path-induced-cut-inequality} fail to give non-trivial upper bounds for lifted edge $y'_{vw}$.
The lifted path-induced cut-inequalities~\eqref{eq:lifted-path-induced-cut-inequality} give the correct upper bound in this case. Example for Proposition~\ref{prop:lifted-path-induced-cut-inequality-strictly-better}.
}
\label{fig:lifted-cut-proof}
\end{center}
\end{figure}

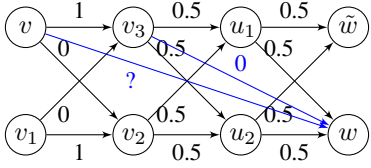
\begin{figure}
\begin{center}
\begin{tikzpicture}[scale=1.3]

\node[style=vertex] (10) at (1, 0) {$v_1$};
\node[style=vertex](20) at (2, 0) {$v_2$};
\node[style=vertex]  (30) at (3, 0) {$u_2$};
\node[style=vertex] (40)  at (4, 0) {${w}$};

\node[style=vertex] (11) at (1, 1) {${v}$};
\node[style=vertex] (21) at (2,1) {$v_3$};
\node[style=vertex] (31) at (3, 1) {$u_1$};
\node[style=vertex] (41) at (4, 1) {$\tilde{w}$};

\draw[base-edge] (10) -- (20)node[midway,below] {\footnotesize{1}};
\draw[base-edge] (20) -- (30)node[midway,below] {\footnotesize{0.5}};
\draw [base-edge] (21) -- (30)node[pos=0.3,above]  {\footnotesize{0.5}};
\draw [base-edge](11) -- (21)node[pos=0.5,above]  {\footnotesize{1}};
\draw [base-edge](21) -- (31)node[pos=0.5,above]  {\footnotesize{0.5}};
\draw [base-edge](31) -- (41)node[pos=0.5,above]  {\footnotesize{0.5}};
\draw [base-edge](31) -- (40)node[pos=0.3,above]  {\footnotesize{0.5}};
\draw [base-edge](20) -- (31)node[pos=0.3,below]  {\footnotesize{0.5}};
\draw[base-edge] (30) -- (41)node[pos=0.3,below]  {\footnotesize{0.5}};
\draw[base-edge] (11) -- (20)node[pos=0.3,above]  {\footnotesize{0}};
\draw[base-edge] (10) -- (21)node[pos=0.3,below]  {\footnotesize{0}};
\draw[base-edge] (30) -- (40)node[midway,below]  {\footnotesize{0.5}};

\draw[lifted-edge] (21) edge node[midway,above]{\footnotesize{0}}(40);

\draw[lifted-edge] (11) edge node[pos=0.3,below]{\footnotesize{?}} (40);

\end{tikzpicture}
\caption{
Exemplary failure case for the lifted path-induced cut inequalities~\eqref{eq:lifted-path-induced-cut-inequality}.
The lifted path-induced cut inequalities~\eqref{eq:lifted-path-induced-cut-inequality2} give the correct upper bound for lifted edge $y'_{vw}$. Example for Proposition~\ref{prop:lifted-path-induced-cut-inequality-strictly-better}.
}
\label{fig:lifted-cut-proof2}
\end{center}
\end{figure}

\subsection{Symmetric Form of Cut Inequalities}
\label{sec:symmetric-inequalities}

Inequalities symmetric to \eqref{eq:path-induced-cut-inequality}:
\begin{gather}
  \nonumber   \forall vw\in E'\ \forall P\in uw\mhyphen\text{paths}(G)\text{ s.t. }\ vu \in \mathcal{R} \wedge u\neq v:\\
 y'_{vw}\leq \sum_{i\in P_V} \sum_{\substack{k\notin P_V,\\ vk \in \mathcal{R}}}y_{ki} \,.
\label{eq:path-induced-cut-inequality-sym}
\end{gather}

Inequalities symmetric to \eqref{eq:lifted-path-induced-cut-inequality}
\begin{gather}
  \nonumber   \forall vw\in E'\ \forall P\in uw\mhyphen\text{paths}(G \cup G')\text{ s.t. }\ vu\in \mathcal{R} \wedge u \neq v: \\
  \nonumber y'_{vw}\leq \sum_{i\in P_V} \sum_{\substack{k\notin P_V,\\ vk \in \mathcal{R}}}y_{ki}-\sum_{ij\in P_{E'}}y'_{ij}\\
 +\sum_{ij\in P_{E'}\cap E}y_{ij} \,.
\label{eq:lifted-path-induced-cut-inequality-sym}
\end{gather}

Inequalities symmetric to \eqref{eq:lifted-path-induced-cut-inequality2}
\begin{gather}
 \nonumber \forall vw\in E'\ \forall P\in uw\mhyphen\text{paths }(G\cup G')\ \text{ s.t. } vu\in E':\\
 \nonumber y'_{vw}\leq \sum_{i\in P_V\setminus u} \ \sum_{\substack{k\notin P_V, \\ vk \in \mathcal{R}}} y_{ki}  -\sum_{ij\in P_{E'}}y'_{ij}\\
   +\sum_{ij\in P_{E'}\cap E}y_{ij}+y'_{vu} \,.
  \label{eq:lifted-path-induced-cut-inequality2-sym} 
\end{gather}

\begin{restatable}{prop}{liftedpathinducedcutbetter-sym}
\label{prop:lifted-path-induced-cut-inequality-strictly-better-sym}
    The lifted path-induced cut inequalities~\eqref{eq:lifted-path-induced-cut-inequality-sym} define a strictly tighter relaxation than the path-induced cut inequalities~\eqref{eq:path-induced-cut-inequality-sym}.\\
    The lifted path-induced cut inequalities~\eqref{eq:lifted-path-induced-cut-inequality-sym} and~\eqref{eq:lifted-path-induced-cut-inequality2-sym} define a strictly better relaxation than~\eqref{eq:lifted-path-induced-cut-inequality-sym} alone.
\end{restatable}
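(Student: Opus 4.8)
The plan is to avoid reproving anything from scratch and instead reduce the symmetric statement to the already-established Proposition~\ref{prop:lifted-path-induced-cut-inequality-strictly-better} by an edge-reversal argument. Concretely, I would introduce the reverse instance $G^R = (V, E^R)$ together with $(G')^R = (V', (E')^R)$, obtained by reversing every base and lifted edge, $ij \in E^R \Leftrightarrow ji \in E$ and $ij \in (E')^R \Leftrightarrow ji \in E'$, and by swapping source and sink, $s \leftrightarrow t$. Since $G$ is acyclic and every node lies on some $st$-path, a short check shows $G^R$ (with new source $t$ and new sink $s$) satisfies the same structural assumptions, and the reachability relation simply transposes, $vw \in \mathcal{R}(G) \Leftrightarrow wv \in \mathcal{R}(G^R)$. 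The feasibility condition~\eqref{eq:lifted-edge-def} is invariant under this reversal, because a $vw$-path in $G$ corresponds bijectively to a $wv$-path in $G^R$ using the same reversed edges; hence node-disjoint $st$-flows and their induced lifted labels are in one-to-one correspondence, and the feasible integral set together with the entire LP relaxation is carried isomorphically between the two instances.

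The key step is to verify that this reversal maps the symmetric inequalities for $G$ exactly onto the non-symmetric inequalities for $G^R$. Reversing a $uw$-path $P$ ending at $w$ yields a $wu$-path in $G^R$ originating at $w$; the side condition $vu \in \mathcal{R}(G)$ becomes $uv \in \mathcal{R}(G^R)$; and a cut edge $ki$ with $k \notin P_V$, $vk \in \mathcal{R}(G)$ becomes an edge $ik$ with $kv \in \mathcal{R}(G^R)$. Under the identification $y_{ki} \leftrightarrow y^R_{ik}$ and $y'_{vw} \leftrightarrow (y')^R_{wv}$, inequality~\eqref{eq:path-induced-cut-inequality-sym} is turned into~\eqref{eq:path-induced-cut-inequality} written for the lifted edge $wv$ in $G^R$, and likewise~\eqref{eq:lifted-path-induced-cut-inequality-sym} and~\eqref{eq:lifted-path-induced-cut-inequality2-sym} become~\eqref{eq:lifted-path-induced-cut-inequality} and~\eqref{eq:lifted-path-induced-cut-inequality2}. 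The parallel-edge bookkeeping in the sums over $P_{E'}$ and $P_{E'} \cap E$ transfers verbatim, since reversal preserves the partition of a path's edges into base and lifted parts. With this dictionary in hand, the two strict inclusions claimed in Proposition~\ref{prop:lifted-path-induced-cut-inequality-strictly-better-sym} are precisely the two strict inclusions of Proposition~\ref{prop:lifted-path-induced-cut-inequality-strictly-better} applied to $G^R$, and nothing further is needed.

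As a fallback that avoids formalizing the reversal correspondence, I would instead mirror the explicit proof of Proposition~\ref{prop:lifted-path-induced-cut-inequality-strictly-better} line by line, defining $S_B^{\mathrm{sym}}, S_{L1}^{\mathrm{sym}}, S_{L2}^{\mathrm{sym}}$ as the feasible sets of the three symmetric families. First I would establish $S_{L1}^{\mathrm{sym}} \subset S_B^{\mathrm{sym}}$ by specializing~\eqref{eq:lifted-path-induced-cut-inequality-sym} to paths with $P_{E'} = \emptyset$, where the last two sums vanish and the right-hand side collapses to that of~\eqref{eq:path-induced-cut-inequality-sym}. Then I would show strictness by taking the counterexamples of Figures~\ref{fig:lifted-cut-proof} and~\ref{fig:lifted-cut-proof2} with all edges reversed and $v,w$ swapped, yielding a point that satisfies the base symmetric inequality but violates the lifted one, and a point lying in $S_{L1}^{\mathrm{sym}}$ but not in $S_{L1}^{\mathrm{sym}} \cap S_{L2}^{\mathrm{sym}}$. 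In either route I expect the main obstacle to be purely careful bookkeeping: confirming that reachability, the inner-sum index sets, and the separation of parallel base and lifted edges all transpose correctly, so that no boundary case is dropped, in particular the special treatment of the terminal vertex $u$ in~\eqref{eq:lifted-path-induced-cut-inequality2-sym}.
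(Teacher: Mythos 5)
Your proposal is correct, and your fallback route is exactly what the paper does: its proof of Proposition~\ref{prop:lifted-path-induced-cut-inequality-strictly-better-sym} consists of the single remark that the argument is analogous to that of Proposition~\ref{prop:lifted-path-induced-cut-inequality-strictly-better}, supplemented by two explicit mirror-image counterexamples (Figures~\ref{fig:lifted-cut-sym-proof} and~\ref{fig:lifted-cut-sym-proof2}) playing the roles of Figures~\ref{fig:lifted-cut-proof} and~\ref{fig:lifted-cut-proof2}; the containment $S_{L1}^{\mathrm{sym}}\subset S_B^{\mathrm{sym}}$ is left to the same ``set $P_{E'}=\emptyset$'' specialization you describe. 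Your primary route via the reversed instance $G^R$ with $s\leftrightarrow t$ swapped is a genuinely different and more systematic argument: it turns the informal ``analogical'' claim into a precise instance-level bijection under which \eqref{eq:path-induced-cut-inequality-sym}, \eqref{eq:lifted-path-induced-cut-inequality-sym}, \eqref{eq:lifted-path-induced-cut-inequality2-sym} on $G$ become \eqref{eq:path-induced-cut-inequality}, \eqref{eq:lifted-path-induced-cut-inequality}, \eqref{eq:lifted-path-induced-cut-inequality2} on $G^R$, so both strict inclusions follow from Proposition~\ref{prop:lifted-path-induced-cut-inequality-strictly-better} applied to the reversals of its own witness instances, with no new figures needed. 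The checks you flag (transposition of $\mathcal{R}$, preservation of the structural assumptions on $G$ after swapping $s$ and $t$, invariance of condition \eqref{eq:lifted-edge-def}, and the $P_{E'}$ versus $P_{E'}\cap E$ bookkeeping including the terminal vertex $u$ and the $y'_{vu}\leftrightarrow (y')^R_{uv}$ term in \eqref{eq:lifted-path-induced-cut-inequality2-sym}) are indeed the only places where something could go wrong, and they all go through. What the reduction buys is reusability and rigor (the same dictionary also disposes of the symmetric analogue of the $S_{L1}\subset S_B$ containment and would cover any future inequality family closed under reversal); what the paper's explicit-example approach buys is concreteness, since the reader sees the fractional points witnessing strictness without having to trust the isomorphism.
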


\begin{proof}
Analogical to the proof of Proposition~\ref{prop:lifted-path-induced-cut-inequality-strictly-better}. See Figure~\ref{fig:lifted-cut-sym-proof} for example analogical to the one in Figure~\ref{fig:lifted-cut-proof} and Figure~\ref{fig:lifted-cut-sym-proof2} for example analogical to the one in Figure~\ref{fig:lifted-cut-proof2}.
\end{proof}

\begin{restatable}{prop}{symmetric-different}
\label{prop:sym-different}

\begin{enumerate}
    \item The path-induced cut inequalities~\eqref{eq:path-induced-cut-inequality} together with their symmetric counterpart \eqref{eq:path-induced-cut-inequality-sym} define a strictly tighter relaxation than  inequalities~\eqref{eq:path-induced-cut-inequality} alone.
    \item The path-induced cut inequalities~\eqref{eq:lifted-path-induced-cut-inequality} together with their symmetric counterpart \eqref{eq:lifted-path-induced-cut-inequality-sym} define a strictly tighter relaxation than  inequalities~\eqref{eq:lifted-path-induced-cut-inequality} alone.
       \item Using path-induced cut inequalities~\eqref{eq:lifted-path-induced-cut-inequality2-sym} together with \eqref{eq:lifted-path-induced-cut-inequality}, \eqref{eq:lifted-path-induced-cut-inequality2} and \eqref{eq:lifted-path-induced-cut-inequality-sym} strictly improves the relaxation.
\end{enumerate}    
\end{restatable}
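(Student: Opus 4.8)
The plan is to handle all three parts with the same two-step template used for Propositions~\ref{prop:lifted-path-strictly-better} and~\ref{prop:lifted-path-induced-cut-inequality-strictly-better}. For each part write $S$ for the feasible set of the asymmetric system named in the statement and $S^{\mathrm{sym}}$ for the feasible set after also imposing the symmetric counterpart. The inclusion $S^{\mathrm{sym}}\subseteq S$ is immediate, since intersecting a polytope with further linear inequalities can only shrink it. Thus the whole content of each part is the \emph{strict} inclusion, and for this I must produce, in each case, a fractional point $(y,y')\in[0,1]^{E}\times[0,1]^{E'}$ satisfying every asymmetric inequality listed but violating one symmetric inequality.

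The structural observation driving the counterexamples is that~\eqref{eq:path-induced-cut-inequality}, \eqref{eq:lifted-path-induced-cut-inequality} and~\eqref{eq:lifted-path-induced-cut-inequality2} only inspect cuts induced by paths \emph{emanating from} $v$ (a forward cut, appearing as the right-hand side), whereas~\eqref{eq:path-induced-cut-inequality-sym}, \eqref{eq:lifted-path-induced-cut-inequality-sym} and~\eqref{eq:lifted-path-induced-cut-inequality2-sym} inspect cuts induced by paths \emph{terminating at} $w$ (a backward cut). This gives a clean sufficient condition: it suffices to exhibit a DAG and a feasible flow for which the minimum forward-cut value over all admissible $v$-originating paths \emph{strictly exceeds} the minimum backward-cut value over all admissible $w$-terminating paths; setting $y'_{vw}$ to any value strictly between these two numbers then yields a point in $S\setminus S^{\mathrm{sym}}$. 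Such asymmetry is attainable precisely because in a directed acyclic graph the forward reachability cone of $v$ and the backward reachability cone of $w$ carry independent information, so the two cut families need not agree.

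Concretely, I would realize these graphs by reversing and relabelling the counterexamples already drawn in the appendix. Reversing all edge orientations and swapping $s\leftrightarrow t$ and $v\leftrightarrow w$ turns a graph whose forward cuts from $v$ are tight into one whose backward cuts to $w$ are tight; the flow values carry over unchanged because the conservation constraints~\eqref{eq:flow-conservation} are invariant under reversal. For part~1 I use a base-edge-only DAG in which every $v$-originating path gives only the trivial bound $y'_{vw}\le1$ from~\eqref{eq:path-induced-cut-inequality}, while~\eqref{eq:path-induced-cut-inequality-sym} on the appropriate $uw$-path forces $y'_{vw}\le0$ (the construction technique is exactly that of Figures~\ref{fig:lifted-cut-sym-proof} and~\ref{fig:lifted-cut-proof}, oriented so the zero cut sits on the $w$-side). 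For part~2 I attach parallel lifted edges as in Figure~\ref{fig:lifted-cut-proof}, so that~\eqref{eq:lifted-path-induced-cut-inequality} stays slack but its symmetric version~\eqref{eq:lifted-path-induced-cut-inequality-sym} is violated. For part~3 I use the reversed analogue of Figure~\ref{fig:lifted-cut-proof2}, in which a single inactive incoming lifted edge $y'_{vu}=0$ lets~\eqref{eq:lifted-path-induced-cut-inequality2-sym} drive $y'_{vw}$ to $0$, while none of~\eqref{eq:lifted-path-induced-cut-inequality}, \eqref{eq:lifted-path-induced-cut-inequality2} and~\eqref{eq:lifted-path-induced-cut-inequality-sym} improves on $y'_{vw}\le1$.

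I expect the main obstacle to be part~3, where three families are already present and must all be kept slack simultaneously. The flow must be chosen so that (i) no forward path from $v$ has an all-zero induced cut toward $w$, (ii) the symmetric lifted inequality~\eqref{eq:lifted-path-induced-cut-inequality-sym} also fails to separate, because the lifted edge it would invoke is active, and yet (iii) the refined symmetric inequality~\eqref{eq:lifted-path-induced-cut-inequality2-sym}, which replaces the sum of incoming base edges at the terminal vertex $u$ by the single inactive variable $y'_{vu}$, still detects the separation. Threading one fractional assignment through all three requirements is the delicate step; once the DAG is fixed, checking each inequality is the same routine arithmetic as in the earlier propositions, and part~3 also subsumes parts~1 and~2 as simpler special cases.
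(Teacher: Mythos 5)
Your proposal is correct and follows essentially the same route as the paper, whose proof of Proposition~\ref{prop:sym-different} consists exactly of three explicit fractional counterexamples (Figures~\ref{fig:sym-cut-proof}, \ref{fig:lifted-cut-sym-proof} and~\ref{fig:lifted-cut-sym-proof2}), each the mirror-image analogue of an earlier figure, on which the listed asymmetric inequalities give only the trivial bound $y'_{vw}\le 1$ while one symmetric inequality forces $y'_{vw}\le 0$. The only inaccuracy is your closing claim that part~3 subsumes parts~1 and~2 as special cases: each part needs its own witness (a witness for part~3 must \emph{satisfy}~\eqref{eq:lifted-path-induced-cut-inequality-sym}, whereas a witness for part~2 must violate it), but since you already propose separate constructions for each part this does not affect the plan.
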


\begin{proof}
\begin{enumerate}
    \item See the example in Figure \ref{fig:sym-cut-proof}.\\
    Upper bound  on $y'_{vw}$ by \eqref{eq:path-induced-cut-inequality}:
    $y'_{vw}\leq 0.5 + 0.5=1$.\\
    Upper bound  on $y'_{vw}$ by \eqref{eq:path-induced-cut-inequality-sym}: $y'_{vw}\leq 0$.
    
    \item See the example in Figure~\ref{fig:lifted-cut-sym-proof}.\\
    Upper bound  on $y'_{vw}$ by~\eqref{eq:lifted-path-induced-cut-inequality}: 
    $y'_{vw}\leq 0.5 + 0.5=1$.\\
    Upper bound  on $y'_{vw}$ by~\eqref{eq:lifted-path-induced-cut-inequality-sym} using path $P=(u_2w)$: $y'_{vw}\leq 0+0.5+0.5-1=0$.
    
    \item See the example in Figure \ref{fig:lifted-cut-sym-proof2}.\\
    Upper bounds  on $y'_{vw}$ by \eqref{eq:lifted-path-induced-cut-inequality}, \eqref{eq:lifted-path-induced-cut-inequality2}, \eqref{eq:lifted-path-induced-cut-inequality-sym}: $y'_{vw}\leq 1$.
    Upper bound  on $y'_{vw}$ by \eqref{eq:lifted-path-induced-cut-inequality2-sym} using path $P=(uw)$ and $y'_{vu}=0$: $y'_{vw}\leq 0$.
\end{enumerate}

\end{proof}

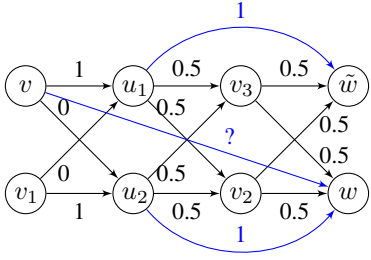
\begin{figure}
\begin{center}
\begin{tikzpicture}[scale=1.3]

\node[style=vertex] (10) at (1, 0) {$v_1$};
\node[style=vertex](20) at (2, 0) {$u_2$};
\node[style=vertex]  (30) at (3, 0) {$v_2$};
\node[style=vertex] (40)  at (4, 0) {${w}$};

\node[style=vertex] (11) at (1, 1) {${v}$};
\node[style=vertex] (21) at (2,1) {$u_1$};
\node[style=vertex] (31) at (3, 1) {$v_3$};
\node[style=vertex] (41) at (4, 1) {$\tilde{w}$};

\draw[base-edge] (10) -- (20)node[midway,below] {\footnotesize{1}};
\draw[base-edge] (20) -- (30)node[midway,below] {\footnotesize{0.5}};
\draw [base-edge] (21) -- (30)node[pos=0.3,above]  {\footnotesize{0.5}};
\draw [base-edge](11) -- (21)node[pos=0.5,above]  {\footnotesize{1}};
\draw [base-edge](21) -- (31)node[pos=0.5,above]  {\footnotesize{0.5}};
\draw [base-edge](31) -- (41)node[pos=0.5,above]  {\footnotesize{0.5}};
\draw [base-edge](31) -- (40)node[pos=0.7,right]  {\footnotesize{0.5}};
\draw [base-edge](20) -- (31)node[pos=0.3,below]  {\footnotesize{0.5}};
\draw[base-edge] (30) -- (41)node[pos=0.7,right]  {\footnotesize{0.5}};
\draw[base-edge] (11) -- (20)node[pos=0.3,above]  {\footnotesize{0}};
\draw[base-edge] (10) -- (21)node[pos=0.3,below]  {\footnotesize{0}};
\draw[base-edge] (30) -- (40)node[midway,below]  {\footnotesize{0.5}};

\draw[lifted-edge] (21) edge [bend left=50] node[midway,above]{\footnotesize{1}}(41);
\draw[lifted-edge] (20) edge [bend right=50] node[midway,above]{\footnotesize{1}}(40);

\draw[lifted-edge] (11) edge node[pos=0.65,above]{\footnotesize{?}} (40);

\end{tikzpicture}
\caption{The best upper bound on $y'_{vw}$ is provided by inequalities~\eqref{eq:lifted-path-induced-cut-inequality-sym}. Example for Proposition~\ref{prop:lifted-path-induced-cut-inequality-strictly-better-sym} and Proposition~\ref{prop:sym-different}.}
\label{fig:lifted-cut-sym-proof}
\end{center}
\end{figure}

\begin{figure}
\begin{center}
\begin{tikzpicture}[scale=1.3]

\node[style=vertex] (10) at (1, 0) {$v_1$};
\node[style=vertex](20) at (2, 0) {$v_2$};
\node[style=vertex]  (30) at (3, 0) {$u$};
\node[style=vertex] (40)  at (4, 0) {${w}$};

\node[style=vertex] (11) at (1, 1) {${v}$};
\node[style=vertex] (21) at (2,1) {$v_3$};
\node[style=vertex] (31) at (3, 1) {$v_4$};
\node[style=vertex] (41) at (4, 1) {$\tilde{w}$};

\draw[base-edge] (10) -- (20)node[midway,below] {\footnotesize{0.5}};
\draw[base-edge] (20) -- (30)node[midway,below] {\footnotesize{0.5}};
\draw [base-edge] (21) -- (30)node[pos=0.3,above]  {\footnotesize{0.5}};
\draw [base-edge](11) -- (21)node[pos=0.5,above]  {\footnotesize{0.5}};
\draw [base-edge](21) -- (31)node[pos=0.5,above]  {\footnotesize{0.5}};
\draw [base-edge](31) -- (41)node[pos=0.5,above]  {\footnotesize{1}};
\draw [base-edge](31) -- (40)node[pos=0.7,right]  {\footnotesize{0}};
\draw [base-edge](20) -- (31)node[pos=0.3,below]  {\footnotesize{0.5}};
\draw[base-edge] (30) -- (41)node[pos=0.7,right]  {\footnotesize{0}};
\draw[base-edge] (11) -- (20)node[pos=0.3,left]  {\footnotesize{0.5}};
\draw[base-edge] (10) -- (21)node[pos=0.3,left]  {\footnotesize{0.5}};
\draw[base-edge] (30) -- (40)node[midway,below]  {\footnotesize{1}};

\draw[lifted-edge] (11) edge node[pos=0.4,below]{\footnotesize{0}}(30);

\draw[lifted-edge] (11) edge node[pos=0.65,above]{\footnotesize{?}} (40);

\end{tikzpicture}
\caption{The best upper bound on $y'_{vw}$ is provided by inequalities~\eqref{eq:lifted-path-induced-cut-inequality2-sym}. Example for Proposition~\ref{prop:lifted-path-induced-cut-inequality-strictly-better-sym} and Proposition~\ref{prop:sym-different}.}
\label{fig:lifted-cut-sym-proof2}
\end{center}
\end{figure}

\begin{figure}
\begin{center}
\begin{tikzpicture}

\node[style=vertex] (11) at (1, 0.5) {$v$};
\node[style=vertex] (10) at (1, -1.5) {$\tilde{v}$};

\node[style=vertex] (22) at (2, 0.5) {$v_3$};
\node[style=vertex] (21) at (2, -0.5) {$v_2$};
\node[style=vertex] (20) at (2, -1.5) {$v_1$};

\node[style=vertex] (31) at (3, 0.5) {$v_5$};
\node[style=vertex] (30) at (3, -1.5) {$v_4$};

\node[style=vertex] (42) at (4, 0.5) {$v_8$};
\node[style=vertex] (41) at (4, -0.5) {$v_7$};
\node[style=vertex] (40) at (4, -1.5) {$v_6$};

\node[style=vertex] (51) at (5, 0.5) {$\tilde{w}$};
\node[style=vertex] (50) at (5, -1.5) {$w$};

\draw[base-edge] (10) edge node[midway,below]{\footnotesize{1}}(20);
\draw[base-edge] (10) edge node[midway,left]{\footnotesize{0}}(21);

\draw[base-edge] (20) edge node[midway,below]{\footnotesize{1}}(30);
\draw[base-edge] (21) edge node[pos=0.5,left]{\footnotesize{0}}(30);

\draw[base-edge] (30) edge node[midway,below]{\footnotesize{1}}(40);
\draw[base-edge] (30) edge node[midway,left]{\footnotesize{0}}(41);

\draw[base-edge] (40) edge node[midway,below]{\footnotesize{1}}(50);
\draw[base-edge] (41) edge node[midway,above]{\footnotesize{0}}(50);

\draw[base-edge] (11) edge node[midway,above]{\footnotesize{0.5}}(22);
\draw[base-edge] (11) edge node[pos=0.5,left]{\footnotesize{0.5}}(21);

\draw[base-edge] (22) edge node[midway,above]{\footnotesize{0.5}}(31);
\draw[base-edge] (21) edge node[midway,right]{\footnotesize{0.5}}(31);

\draw[base-edge] (31) edge node[midway,right]{\footnotesize{0.5}}(41);
\draw[base-edge] (31) edge node[midway,above]{\footnotesize{0.5}}(42);

\draw[base-edge] (41) edge node[pos=0.5,right]{\footnotesize{0.5}}(51);
\draw[base-edge] (42) edge node[midway,above]{\footnotesize{0.5}}(51);

\draw[base-edge] (22) edge node[pos=0.8,right]{\footnotesize{0}}(30);
\draw[base-edge] (42) edge node[pos=0.8,right]{\footnotesize{0}}(50);

\draw[lifted-edge] (11) edge node[midway,above]{\footnotesize{?}}(50) ;

\end{tikzpicture}
\caption{
The best upper bound on $y'_{vw}$ is provided by inequalities~\eqref{eq:path-induced-cut-inequality-sym}. Example for Proposition~\ref{prop:sym-different}.
}
\label{fig:sym-cut-proof}
\end{center}
\end{figure}
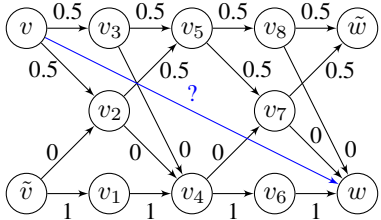

\subsection{Other Valid Inequalities}
Basic flow constraints~\eqref{eq:flow-conservation} together with the advanced constrains on lifted edges~\eqref{eq:vwcut1}-\eqref{eq:lifted-path-induced-cut-inequality2} are sufficient for defining the set of feasible solutions of the lifted disjoint paths problem~\eqref{eq:lifted-disjoint-paths-problem}. Moreover, they define an efficient LP relaxation (Section~\ref{sec:constraints}) and enable efficient separation procedures (Section~\ref{sec:separation}). 
Below, we present lifted flow inequalities specific to the lifted disjoint paths problem applied to MOT that help to improve the speed of our ILP solver.
The inequalities depend on the fact that every node can be connected to maximally one node in each time frame.
Therefore the number of lifted edges originating (or ending) in a given point and ending (resp.\ originating) in a specific time frame is at most one.
\begin{align}
\nonumber\forall k,l \in \{1,\dots,T\}: k>l,\ \forall v\in V_l:\\
\label{eq:lifted-flow1}    \sum_{vu\in E':u\in V_k} y'_{vu}\leq x_v\,,
\end{align}
\begin{align}
\nonumber\forall k,l \in \{1,\dots,T\}: k<l,\ \forall w\in V_l:\\
\label{eq:lifted-flow2}    \sum_{uw\in E':u\in V_k} y'_{uw}\leq x_w\,.
\end{align}
The number of constraints~\eqref{eq:lifted-flow1} and~\eqref{eq:lifted-flow2} is linear in the number of vertices. Therefore, we add them to our initial constraint set. This enables to reduce the search space for the branch and bound method in the early solver stages when only few constraints of type~\eqref{eq:path-inequalities}-\eqref{eq:lifted-path-induced-cut-inequality2} have been added.

\subsection{Proofs for Section \ref{sec:complexity} Complexity}
\label{sec:appendix-complexity}
We define $Y_{GG'}$ to be the set of all $(y,y')\in \{0,1\}^E\times \{0,1\}^{E'}$ such that $(y,y')$ are feasible solutions of the lifted disjoint path problem~\eqref{eq:lifted-disjoint-paths-problem}.
\paragraph{Integer multicommodity flow.}

The integer multicommodity flow problem is defined on a directed graph $\mathcal{G} = \mathcal{(V,E)}$ with edge capacities $c \in \N^\mathcal{E}$ and source/sink pairs $s_i t_i$ and edge flows $f_i \in \N^{\mathcal{E}}$ and demands $R_i$, $i=1,\ldots,k$.

The aim is to send $k$ flows from their sources to their sinks such that the flows obey the edge capacities. Formally,
\begin{align}
    & \sum_{i=1}^k f^i_e \leq c_e & \forall e \in E \\
    & \sum_{u: uv \in E} f^i_{uv} = \sum_{w: vw \in E} f^i_{vw} &\forall i\in[k]\  \forall v \notin \{s_i, t_i\} \\
 \label{eq:flow-tr-req}   & \sum_{v: s_i v \in E} f^i_{s_i v} \geq R_i &\forall i\in[k]
\end{align}
where $[k]$ denotes the set $\{1,\dots,k\}$.
Even has shown in~\cite{EvenMulti} that the integer multicommodity flow problem is NP-complete also in the case of unit capacity edges and two source sink pairs.
Below we detail a construction that gives us a correspondence between edge-disjoint paths in $\mathcal{G}$  and node-disjoint paths in the transformed graph $G$.
This construction is similar to transforming a graph into its line graph.
The lifted edges in the transformed graph will count how many units of flow go from sources to sinks.

\begin{lemma}
\label{lemma:multicommodity-flow-reduction}

There exists a polynomial transformation from any graph $\mathcal{G}$ with source/sink pairs $s_i,t_i$, $i=1,\ldots,k$ with demands $R_i$ to
a pair of graphs $G$ and $G'$ with edge costs $c$ and $c'$ respectively such that
there exists a feasible integer multicommodity flow in $\mathcal{G}$ if and only if the lifted disjoint paths problem for $G,G'$ has objective
$\min_{(y,y')\in Y_{GG'}} \la c, y \ra + \la c', y' \ra \leq - \sum_{i=1}^k R_i$.
\end{lemma}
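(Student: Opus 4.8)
The plan is to reduce from integer multicommodity flow with unit capacities, which is NP-complete already for two source/sink pairs~\cite{EvenMulti}. First I would ensure, by taking acyclic instances (see the obstacle below), that $\mathcal{G}$ is a DAG, so that the graph $G$ built below is acyclic as required by the problem formulation; I would also note that $R_i \le |\mathcal{E}|$ may be assumed, since otherwise commodity $i$ is trivially infeasible, which keeps the construction polynomial.

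Next I would turn edge-disjointness in $\mathcal{G}$ into node-disjointness in $G$ by a line-graph construction: create one node $v_e$ in $G$ for every edge $e\in\mathcal{E}$, and a base edge $v_e v_f$ whenever the head of $e$ equals the tail of $f$. A node-disjoint family of paths through this part of $G$ then corresponds exactly to a family of edge-disjoint trails in $\mathcal{G}$, so that using every edge of $\mathcal{G}$ at most once matches the unit-capacity constraint $\sum_i f^i_e \le 1$. To route several units per commodity under the vertex-capacity $x_v \le 1$, I would attach, for each commodity $i$, exactly $R_i$ source-copies $\sigma_i^1,\dots,\sigma_i^{R_i}$ and $R_i$ sink-copies $\tau_i^1,\dots,\tau_i^{R_i}$: add base edges $s\sigma_i^j$ and $\tau_i^j t$, connect each $\sigma_i^j$ to every $v_e$ with $e$ leaving $s_i$, and connect every $v_f$ with $f$ entering $t_i$ to each $\tau_i^j$. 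Finally I set all base and node costs to $0$ and, for the lifted graph $G'$, add a lifted edge $\sigma_i^j \tau_i^{j'}$ of cost $c'=-1$ for every commodity $i$ and all $j,j'$, with no cross-commodity lifted edges. Since feasibility forces $y'$ to obey~\eqref{eq:lifted-edge-def}, the objective equals $-1$ times the number of active same-commodity lifted edges.

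The heart of the argument is a one-to-one bookkeeping between correctly routed units and rewards. Since the only predecessor of any $\sigma$-node is $s$ and the only successor of any $\tau$-node is $t$, each $s$-$t$ flow path meets exactly one $\sigma$-node (first) and one $\tau$-node (last); hence it activates exactly one lifted edge of cost $-1$ precisely when it enters through some $\sigma_i^j$ and leaves through some $\tau_i^{j'}$ of the \emph{same} commodity $i$, i.e.\ when it realises an $s_i$-$t_i$ trail in $\mathcal{G}$. Consequently the total reward equals the number of correctly routed units, commodity $i$ contributes at most $R_i$ (only $R_i$ disjoint source-copies exist), and the total reward is at most $\sum_i R_i$. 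For ($\Leftarrow$), a feasible multicommodity flow decomposes into $R_i$ edge-disjoint $s_i$-$t_i$ paths per commodity; translating them through the line graph and assigning distinct source/sink copies yields $\sum_i R_i$ node-disjoint $s$-$t$ paths, each activating one lifted edge, so the optimum is $\le -\sum_i R_i$. For ($\Rightarrow$), an optimum of value $\le -\sum_i R_i$ forces exactly $R_i$ correctly routed, node-disjoint units for every commodity; reading the line-graph portions back gives, per commodity, $R_i$ globally edge-disjoint $s_i$-$t_i$ trails, which is a feasible integer multicommodity flow.

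The step I expect to be the main obstacle is getting the \emph{per-commodity} demand right rather than merely the aggregate: I must argue that capping each commodity at exactly $R_i$ source/sink copies makes the total reward reach $\sum_i R_i$ only when every commodity is individually saturated, so that the threshold $-\sum_i R_i$ cannot be met by over-serving one commodity at the expense of another. The secondary technical point is justifying acyclicity of $G$, so that the constraint framework and the ``one $\sigma$/one $\tau$ per path'' reasoning apply; I would handle this by reducing from an acyclic version of multicommodity flow or by a standard layered/time-expanded reformulation of $\mathcal{G}$.
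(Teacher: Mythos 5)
Your construction is essentially the paper's: reduce from unit-capacity integer multicommodity flow, pass to the line graph so that edge-disjointness in $\mathcal{G}$ becomes node-disjointness in $G$, attach $R_i$ source copies per commodity, and use lifted edges of cost $-1$ to count correctly routed units, so that the threshold $-\sum_{i=1}^k R_i$ is met exactly when every commodity is individually saturated. The only (harmless) deviation is that you also introduce sink copies and hang the lifted edges between source and sink copies, whereas the paper places them directly between the line-graph nodes $v_{s_i j}$ and $v_{k t_i}$; both variants support the same one-lifted-edge-per-path bookkeeping, and your explicit attention to acyclicity and to per-commodity (rather than aggregate) saturation addresses points the paper's proof leaves implicit.
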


\begin{proof}
Without loss of generality, we consider these feasible flow sets $f_1,\dots,f_k$ where it holds $\forall i\in [k]:$ $\sum_{s_i v \in E} f^i_{s_i v} = R_i$. Note that if the flow of commodity $i$ is higher than its demand $R_i$, we can reduce it to $R_i$ by removing the flow across one or more $s_it_i\mhyphen$paths in $\mathcal{G}$ without violating other constraints.\\
We first detail the graph transformation (see Figures~\ref{fig:mcf-reduction} and \ref{fig:mcf-reduction2}).
\begin{itemize}
 \item For all edges $ij\in \mathcal{E}$ add a vertex $v_{ij}$ to $V$.
 \item For each pair of vertices  $v_{ij},v_{jk}\in V$ add an edge $(v_{ij},v_{jk})$ to $E$.
 \item Add vertices $s$ and $t$ to $V$.
 \item Add to $V$ vertices $s_{i}^1,s_{i}^2,\dots ,s_{i}^{R_i}$ representing requirements of each commodity $i$.
 \item For each vertex $s_{i}^r$ add an edge $(s,s_{i}^r)$ to $E$.
 \item For each pair of vertices $s_{i}^r,v_{s_ij}$ add edge $(s_{i}^r,v_{s_ij})$ to $E$.
 \item For all $v_{kt_i}\in V$ (representing and edge from $k$ to $t_i$ in $\mathcal{G}$) add an edge $(v_{kt_i},t)$ to $E$.
 \item For all pairs of vertices $v_{s_ij}$ $v_{kt_i}\in V$ add an edge $(v_{s_ij},v_{kt_i})$ to $E'$. That is, the lifted edges connect all vertices representing edges from $s_i$ in $\mathcal{G}$ with vertices representing the edges to $t_i$ in $\mathcal{G}$.
 \item Cost function on base edges $\forall e\in E:$  $c_{e}=0$. 
 \item Cost function on lifted edges $\forall e'\in E':$ $c'_{e'}=-1$.
\end{itemize}
An illustration of this construction can be seen in 
Figures~\ref{fig:mcf-reduction} and \ref{fig:mcf-reduction2}.
Note that the construction of $\mathcal{G}$ in \cite{EvenMulti} allows $s_i=s_j$ for $i \neq j$.
In this case, we still construct separate vertices for their incident edges in $G$.

Every path $\mathcal{P}=(s_ik_1,k_1k_2,\dots, k_nt_i)$ in $\mathcal{G}$ can be assigned to a path $P=(s{s_{i}^r},{s_{i}^r}v_{s_ik_1},v_{s_ik_1}v_{k_1k_2},\dots,v_{k_nt_i}t)$ in $G$ where $r\in[R_i]$ can be chosen arbitrarily and vice versa. Note that such a path $P$ saturates exactly one lifted edge $(v_{s_ik_1},v_{k_nt_i})$.
Moreover, every feasible set of flow functions $f_1,\dots, f_k$ satisfying  for all $i\in [k]:$ $\sum_{s_i v \in E} f^i_{s_i v} = R_i $ defines a set of edge-disjoint paths from $s_1,\dots, s_k$ to $t_1,\dots,t_k$ in $\mathcal{G}$.
This set corresponds to a set of $\sum_{i=1}^k R_i$ $st$-paths in $G$ whose edges and vertices are disjoint and where every path saturates exactly one lifted edge $v_{s_ij}v_{kt_i}$.  Every lifted edge contributes with $-1$ to the total cost. So, this set of disjoint $st$-paths has total cost $-\sum_{i=1}^k R_i$.\\
Reversely, let us have a set of vertex- and edge-disjoint $st$-paths in $G$ of size $\sum_{i=1}^k R_i$ where every path contains some $v_{s_{i}j}v_{kt_i}$-path as its subpath and therefore its cost is $ -\sum_{i=1}^k R_i$. This set defines uniquely a set of feasible flow functions $f_1,\dots,f_k$.\\
So, there exist feasible functions $f_1,\dots,f_k$ 
 satisfying $f_i=R_i$ for all $i\in [k]$
 iff $\min\limits_{(y,y')\in Y_{GG'}}\gamma(y,y')\leq -\sum_{i=1}^k R_i$.

\end{proof}

\begin{figure}
\begin{center}
\begin{tikzpicture}[scale=1.3]

\node[style=vertex] (10) at (1, 0) {$s_1$};
\node[style=vertex] (11) at (1, 1) {$s_2$};
\node[style=vertex](20) at (2, 0) {$a$};
\node[style=vertex] (21) at (2,0.5) {$b$};
\node[style=vertex] (22) at (2,1) {$c$};

\node[style=vertex]  (30) at (3, 0.5) {$d$};
\node[style=vertex] (31) at (3, 1) {$e$};

\node[style=vertex] (40)  at (4, 0) {$t_1$};
\node[style=vertex] (41) at (4, 1) {$t_2$};

\draw[base-edge] (10) -- (20)node[midway,below] {};

\draw [base-edge] (21) -- (30)node[pos=0.3,above]  {};
\draw [base-edge](11) -- (21)node[pos=0.5,above]  {};
\draw [base-edge](21) -- (31)node[pos=0.5,above]  {};
\draw [base-edge](31) -- (41)node[pos=0.5,above]  {};
\draw [base-edge](31) -- (40)node[pos=0.3,above]  {};
\draw [base-edge](22) -- (31)node[pos=0.3,below]  {};
\draw[base-edge] (30) -- (41)node[pos=0.3,below]  {};
\draw[base-edge] (11) -- (22)node[pos=0.3,above]  {};
\draw[base-edge] (10) -- (21)node[pos=0.3,below]  {};
\draw[base-edge] (20) -- (40)node[midway,below] {};


\end{tikzpicture}
\caption{Integer multicommodity flow network transformation: Original graph.}
\label{fig:mcf-reduction}
\end{center}
\end{figure}
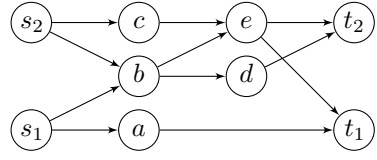

\begin{figure}
\begin{center}
\tikzset{vertex/.style={circle, draw, inner sep=0pt,text height=3.4mm,text width=3.6mm,text depth=1.5mm,align=center}}

\begin{tikzpicture}[scale=1.3]

\node[style=vertex] (01) at (0, 0) {\footnotesize{$s$}};
\node[style=vertex] (51) at (5, 0) {\footnotesize{$t$}};

\node[style=vertex] (12) at (1, -2) {\footnotesize{$s_{1}^2$}};
\node[style=vertex] (13) at (1, -1) {\footnotesize{$s_{1}^3$}};
\node[style=vertex] (14) at (1, 1) {\footnotesize{$s_{2}^1$}};
\node[style=vertex] (15) at (1, 2) {\footnotesize{$s_{2}^2$}};

\node[style=vertex] (21) at (2, -2) {\footnotesize{$s_{1}a$}};

\node[style=vertex] (23) at (2, -1) {\footnotesize{$s_{1}b$}};
\node[style=vertex] (24) at (2, 1) {\footnotesize{$s_{2}b$}};
\node[style=vertex] (25) at (2, 2) {\footnotesize{$s_{2}c$}};

\node[style=vertex] (31) at (4, -2) {\footnotesize{$at_{1}$}};

\node[style=vertex] (33) at (3, 0) {\footnotesize{$bd$}};
\node[style=vertex] (34) at (3, 1) {\footnotesize{$be$}};
\node[style=vertex] (35) at (3, 2) {\footnotesize{$ce$}};

\node[style=vertex] (42) at (4, 1) {\footnotesize{$dt_2$}};
\node[style=vertex] (43) at (4, -1) {\footnotesize{$et_1$}};
\node[style=vertex] (44) at (4, 2) {\footnotesize{$et_2$}};

\draw[base-edge] (01) edge (12);
\draw[base-edge] (01) edge (13);
\draw[base-edge] (01) edge (14);
\draw[base-edge] (01) edge (15);

\draw[base-edge] (12) edge (21);

\draw[base-edge] (12) edge (23);

\draw[base-edge] (13) edge (21);

\draw[base-edge] (13) edge (23);

\draw[base-edge] (14) edge (24);
\draw[base-edge] (14) edge (25);

\draw[base-edge] (15) edge (24);
\draw[base-edge] (15) edge (25);

\draw[base-edge] (21) edge (31);

\draw[base-edge] (23) edge (33);
\draw[base-edge] (23) edge (34);

\draw[base-edge] (24) edge (33);
\draw[base-edge] (24) edge (34);

\draw[base-edge] (25) edge (35);

\draw[base-edge] (31) edge [bend right=30] (51);

\draw[base-edge] (33) edge (42);

\draw[base-edge] (33) edge (42);

\draw[base-edge] (34) edge (43);
\draw[base-edge] (34) edge (44);

\draw[base-edge] (35) edge (43);
\draw[base-edge] (35) edge (44);

\draw[base-edge] (42) edge (51);
\draw[base-edge] (43) edge (51);
\draw[base-edge] (44) edge (51);

\draw[lifted-edge] (21) edge[bend right=30] node[pos=0.3,above]{\footnotesize{-1}} (31);
\draw[lifted-edge] (21) edge  node[pos=0.3,above]{\footnotesize{-1}}  (43);

\draw[lifted-edge] (23) edge node[pos=0.3,above]{\footnotesize{-1}} (31);
\draw[lifted-edge] (23) edge[bend left=30] node[pos=0.3,above]{\footnotesize{-1}} (43);

\draw[lifted-edge] (24) edge [bend right=30]node[pos=0.3,above]{\footnotesize{-1}} (42);
\draw[lifted-edge] (24) edge node[pos=0.3,above]{\footnotesize{-1}} (44);

\draw[lifted-edge] (25) edge node[pos=0.3,above]{\footnotesize{-1}} (42);
\draw[lifted-edge] (25) edge[bend left=30] node[pos=0.3,above]{\footnotesize{-1}} (44);

\end{tikzpicture} 
\caption{Integer multicommodity flow network transformation. Transformed graph from Figure \ref{fig:mcf-reduction} for flow demands $R_1=2, R_2=2$. Edges without label have cost 0.}
\label{fig:mcf-reduction2}
\end{center}
\end{figure}
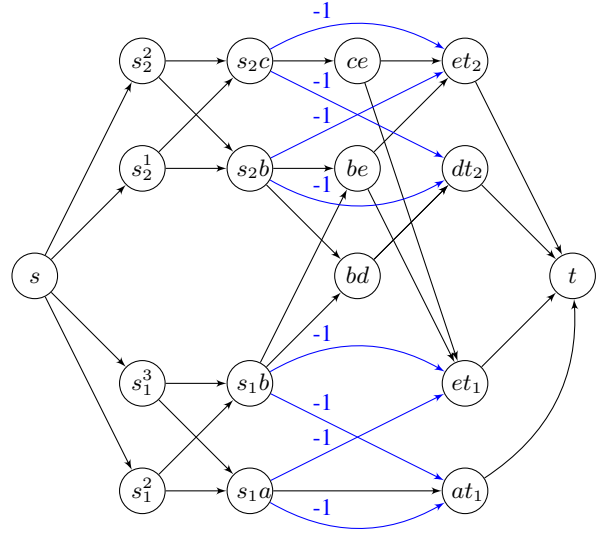

\multicommodityflowreduction*
\begin{proof}
The NP-complete integer multicommodity flow problem with unit edge capacities can be reduced in polynomial time to the lifted disjoint paths problem~\eqref{eq:lifted-disjoint-paths-problem} with negative lifted edges only. The transformation is described in Lemma~\ref{lemma:multicommodity-flow-reduction}.

\end{proof}

\paragraph{3-SAT.}
The boolean satisfiability problem (SAT) is a classical NP-complete problem~\cite{cook1971complexity}. 
A transformation from its NP-complete special version 3-SAT is commonly used for proving than a problem is NP-hard or NP-complete.

\threesatreduction*

\begin{proof}

Below, we detail a transformation from 3-SAT to the lifted disjoint paths problem with positive lifted edges only. For the transformation, it holds that a 3-SAT formula consisting of $k$ clauses has a true assignment iff $\min\limits_{(y,y')\in Y_{GG'}}\gamma(y,y')\leq -(k-1)$.

Let a 3-SAT problem containing $k$ ordered clauses $C_1\dots C_k$ be given.
Each clause $C_i$ consists of a conjunction of literals, which is either a variable $a$ or its complement $\overline{a}$.
We construct graphs $G=(V,E)$ and $G'=(V',E')$ as follows. 
\begin{itemize}
 \item The graph $G$ has $k$ layers. Every layer corresponds to one clause. Each layer contains 3 vertices labeled with the literals in the corresponding clause.
 Specifically, for a variable $a$ in clause $C_i$ we associate node $v_{ia}$, analoguously for a complemented variable $\overline{b}$ in clause $C_i$ we associate node $v_{i\bar{b}}$.
 \item For every pair of vertices $v_{il_1}\in V$ and $v_{i+1l_2}\in V$ where $l_1\neq \bar{l_2}$ add an edge $(v_{il_1},v_{i+1l_2})$ to $E$ and set $c_{(v_{il_1},v_{i+1l_2})}=-1$.
 \item For every variable $a$ and every pair of vertices $v_{ia},v_{j\bar{a}}\in V$ where $j>i+1$ add an edge $(v_{ia},v_{j\bar{a}})$ to $E'$ and set $c'_{(v_{ia},v_{j\bar{a}})}=k$.
 Do so analoguously for every pair of variables $v_{i\bar{a}}$ and $v_{ja}$.
 \item Add an edge from $s$ to all vertices corresponding to the first clause.
 And an edge to $t$ from all vertices corresponding to the last clause.
\end{itemize}
An illustration of this construction can be found in Figure~\ref{fig:3sat}.

\begin{figure}
\begin{tikzpicture}[scale=1.3]

\node[style=vertex] (10) at (1, 0) {$\bar{c}$};

\node[style=vertex](20) at (2, 0) {$\bar{d}$};
\node[style=vertex]  (30) at (3, 0) {$e$};
\node[style=vertex] (40)  at (4, 0) {$\bar{e}$};

\node[style=vertex] (11) at (1, 1) {$b$};
\node[style=vertex] (21) at (2,1) {$c$};
\node[style=vertex] (31) at (3, 1) {$c$};
\node[style=vertex] (41) at (4, 1) {$c$};

\node[style=vertex] (12) at (1, 2) {$a$};
\node[style=vertex] (22) at (2, 2) {$a$};
\node[style=vertex] (32) at (3, 2) {$\bar{a}$};
\node[style=vertex] (42) at (4, 2) {$\bar{a}$};

\node[style=vertex] (01) at (0, 1) {$s$};
\node[style=vertex] (51) at (5, 1) {$t$};

\draw[base-edge] (10) -- (20)node[midway,below] {\footnotesize{-1}};
\draw[base-edge] (20) -- (30)node[midway,below] {\footnotesize{-1}};
\draw [lifted-edge] (10) -- (41)node[pos=0.7,below] {\footnotesize{4}};
\draw[lifted-edge] (10) -- (31)node[midway,above] {\footnotesize{4}} ;
\draw [base-edge] (21) -- (32) node[pos=0.6,above]  {\footnotesize{-1}};
\draw [base-edge] (12) -- (21)node[near start,above]  {\footnotesize{-1}};
\draw [base-edge] (21) -- (30)node[pos=0.7,below]  {\footnotesize{-1}};
\draw[base-edge] (12) -- (22)node[midway,above]  {\footnotesize{-1}};
\draw[base-edge] (32) -- (42)node[midway,above]  {\footnotesize{-1}};
\draw [base-edge](11) -- (21)node[pos=0.2,above]  {\footnotesize{-1}};
\draw [base-edge](21) -- (31)node[pos=0.2,above]  {\footnotesize{-1}};
\draw [base-edge](31) -- (41)node[pos=0.2,above]  {\footnotesize{-1}};
\draw [base-edge](20) -- (31)node[pos=0.3,below]  {\footnotesize{-1}};
\draw [base-edge](31) -- (42)node[pos=0.6,above]  {\footnotesize{-1}};
\draw[base-edge] (30) -- (41)node[pos=0.3,below]  {\footnotesize{-1}};
\draw[base-edge] (11) -- (22)node[pos=0.6,above]  {\footnotesize{-1}};
\draw[base-edge] (11) -- (20)node[midway,above]  {\footnotesize{-1}};
\draw [base-edge](32) -- (41)node[pos=0.2,above]  {\footnotesize{-1}};
\draw[base-edge] (22) -- (31)node[pos=0.2,above]  {\footnotesize{-1}};
\draw[base-edge] (31) -- (40)node[midway,above]  {\footnotesize{-1}};
\draw [base-edge](10) -- (22)node[pos=0.9,below]  {\footnotesize{-1}};

\draw [base-edge](12) -- (20)node[pos=0.1,below]  {\footnotesize{-1}};
\draw [base-edge](22) -- (30)node[pos=0.1,below]  {\footnotesize{-1}};
\draw [base-edge](20) -- (32)node[pos=0.9,below]  {\footnotesize{-1}};
\draw [base-edge](32) -- (40)node[pos=0.1,below]  {\footnotesize{-1}};
\draw [base-edge](30) -- (42)node[pos=0.9,below]  {\footnotesize{-1}};

\draw[base-edge] (01) -- (10);
\draw[base-edge] (01) -- (11);
\draw[base-edge] (01) -- (12);

\draw [base-edge](40) --  (51) ;
\draw [base-edge] (41) -- (51);
\draw [base-edge] (42) -- (51);


\draw[lifted-edge] (12) edge [bend left=40]  node[midway,above]{\footnotesize{4}} (32);
\draw[lifted-edge] (22) edge [bend left=40] node[midway,above]{\footnotesize{4}} (42); 
\draw[lifted-edge] (12) edge [bend left=60]  node[midway,above]{\footnotesize{4}} (42);

\end{tikzpicture} 
\label{fig:multi-com-flow-trafo}
\caption{
Reduction to lifted disjoint paths problem for 3-SAT formula
$(a\vee b\vee \bar{c})\wedge (a\vee c\vee \bar{d})\wedge(\bar{a}\vee c\vee e)\wedge(\bar{a}\vee c\vee \bar{e})$.
}
\label{fig:3sat}
\end{figure}
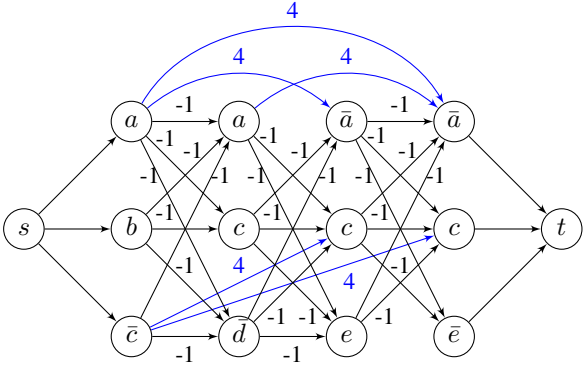

 Every path $P\in st$-paths$(G)$ that has cost $-(k-1)$ saturates vertices labelled by non-contradicting literals. We can obtain a 3-SAT solution from P as follows. If $v_{ia}\in P_V$, set variable $a:=true$. If $v_{j\bar{b}}\in P$, set variable $b:=false$. Variables not contained as labels of vertices in $P_V$ can have arbitrary values. \\
 Similarly, every solution of 3-SAT problem defines at least one path $P\in st$-paths$(G)$ that has cost $-(k-1)$.

\end{proof}
\subsection{Implementation Details on the Lifted Disjoint Paths Solver}
The solver for the lifted disjoint paths problem is implemented in C++ and builds upon Gurobi 7.5.
All experiments were conducted on a machine with a 6-Core Intel 2.00GHz CPU and 128 GB RAM. 

\subsection{Optimal data association}
\label{sec:optimal_assignment_definition}
The experiment of Section \ref{sec:exp_long_range_edges} compares the assignments of our tracking system with the optimal assignments. 
We elaborate on the details to obtain the optimal assignments. 
We start with the pre-processed input detections, according to Section \ref{sec:pre_post_processing}. 
For each frame, we compute the intersection over union between the detections and ground-truth boxes of the respective frame, which forms a weighted bipartite graph. Edges with a corresponding intersection over union below $0.5$ are removed. 
Then, we use Hungarian matching to find a maximum-weight matching. Unmatched detections are considered as false positives, while matched detections are assigned the corresponding ground-truth label. Thus, we obtain the trajectories on the input detections using the optimal assignment. Finally, depending on the time threshold of Table \ref{tab:tracking_performance_over_time}, trajectories are synthetically splitted at skip-edges longer than the specified threshold.

\subsection{Ablation study on post-processing methods.}
\label{sec:ablation_postprocessing}

Solving the proposed lifted disjoint paths problem establishes the assignment of input detections to object identities very close to the best possible assignment (Section \ref{sec:exp_long_range_edges}).

To localize tracked objects also in the frames in which the object detector failed to detect them, some trackers apply an additional object detector on these frames based on the available input detections. This can be seen as performing interpolation and extrapolation, if viewed from the perspective of data association in a tracking-by-detection framework, e.g. see \cite{bergmann2019tracking}. 
As a result, improvements can be achieved from extending trajectories  to image areas without input detections by applying of a very accurate object detector.

In order to make our tracking performance comparable with other trackers, we follow this strategy and employ an inter- and extrapolation based on \cite{bergmann2019tracking}. 

During the inter- and extrapolation, output detections (coming from the lifted disjoint paths solver) are preserved.
In particular, the detections are not rejected, reshaped, neither are their labels changed by Tracktor. Instead, we apply Tracktor to recover further locations of an object 
in the frames where detections of the object were missing. The procedure is based on its trajectory obtained from the lifted disjoint paths solver.
Note that our adaption ignores additional, unassigned input detections, whereas the original implementation  \cite{bergmann2019tracking} of Tracktor fuses the detections coming from Tracktor's detector with detections provided by the dataset.

\begin{table}[hbt]
\begin{center}
\begin{tabular}{lcc}
\toprule Method & MOTA & IDF1  \\ \hline
Assignment & $52.8$ &  $64.3$ \\
Assignment (optimal) & $53.4$ &  $66.8$ \\
Assignment+SI & $57.8$ &  $67.6$ \\
Assignment+$\text{SI}^{*}$ & $59.5$ &  $68.9$ \\
Assignment+VI & $59.6$ &  $68.5$ \\
Assignment+VI+VE & $65.7$ &  $71.5$ \\
Assignment+VI+VE+SI & $\mathbf{67.0}$ &  $72.4$ \\
\bottomrule
\end{tabular}
\end{center}
\caption{Ablation study on inter- and extrapolation, evaluated on the MOT17 train set. SI = spatial interpolation only on sequences filmed from a static camera, $\text{SI}^{*}$ = spatial interpolation on all sequences, VI = visual interpolation, VE = visual extrapolation. Assignment and assignment (optimal) denote the results of the lifted disjoint paths problem and the optimal assignment, as reported in Section \ref{sec:exp_long_range_edges} given  $2\mathrm{s}$ time gap. Note that Tracktor's object detector is fine-tuned on MOT17Det. In our experiments, this resulted in bigger improvements on the MOT17 training set than on the test set, compare Table \ref{tab:mot}.
} 
\label{tab:tracking_ablation}
\end{table}

Table \ref{tab:tracking_ablation} reports the influence of employing inter- and extrapolation.
The first two rows repeat values from Table~\ref{tab:tracking_performance_over_time} given the maximal $2\mathrm{s}$ time gap. Since our solver produces nearly optimal data assignemt with respect to the used input detections, further improvements can only be achieved by applying interpolation and extrapolation on the tracks obtained by the solver.

We compare the visual interpolation (VI) as well as visual extrapolation (VE), both using the method of \cite{bergmann2019tracking} with spatial interpolation (SI). For SI, we employ linear interpolation based solely on the geometric bounding box information. 

The interpolation SI is applied only to sequences with a fixed camera in order to guarantee robust approximations.
Still, the improvements by Assignment+SI over the baseline is evident. Especially the MOTA metric, which measures mainly the coverage of objects by detections, improves by about 10\%. We also evaluate spatial interpolation for all sequences ($\text{SI}^{*}$), which improves the tracker further to $59.5$ MOTA and $68.9$ IDF1. However, performing spatial interpolation on sequences with moving cameras can lead to error propagation. Thus, our final tracker Lif\_T relies on the more robust visual interpolation and employs spatial interpolation only on sequences filmed from a static camera.

On the contrary, the  visual interpolation based on \cite{bergmann2019tracking} can be applied robustly to all sequences, but only in situations where the object is visible.
Accordingly, the method Assignment+VI further improves over the baseline, as it is applied to more frames.

Recovering the position of tracked objects also outside of the time range of its computed trajectory (Assignment+VI+VE) further helps to improve the tracking accuracy, enhancing MOTA by about 20\% and IDF1 by about 10\% IDF1, as VE extends computed trajectory thereby achieving longer identity consistencies.

Finally, we employ spatial interpolation on the remaining cases where detections are missing and the objects are fully occluded (Assignment+VI+VE+SI) resulting in a slight improvement over Assignment+VI+VE.

Note that we use the method Assignment+VI+VE+SI to evaluate our tracker on the MOT15, MOT16 and MOT17 test set, as reported in Table \ref{tab:mot}. The impact of the post-processing on the training set using Tracktor seems to be very high. We conjectured this might be due to the fact that Tracktor's object detector is trained on MOT17Det (which are the detections of MOT17), leading to some degree of overfitting. Note that Tracktor is not trained the MOT17 tracking ground truth, so that it is still regarded as a meaningful validation procedure \cite{bergmann2019tracking}.
Therefore, we created another tracker Lif\_TsimInt that uses a simple interpolation, namely only linear interpolation between detections of a trajectory, for all sequences. The tracker thus corresponds to Assignment+$\text{SI}^{*}$.
Comparing Table \ref{tab:tracking_ablation} with Table \ref{tab:mot}, we see that indeed, the impact of the post-processing on the test set is significantly lower. We conclude that while the post-processing improves the tracking performance, the main performance of our tracker is due to our contributions.

Recall that most offline tracking systems obtain trajectories by solving a data association problem, e.g.~\cite{Henschel_2018_CVPR_Workshops,tang2017multiple,ristani2018features}. Our proposed tracker is able to achieve near-optimal results with respect to the input detections. Applying interpolation and extrapolation further improves the results, and makes it conceptually comparable to Tracktor. 
Still, with post-processing on our computed data-association, we improve over Tracktor by 25\%. We argue that solving the data association accurately is important to obtain a final high-quality result after post-processing.

\subsection{Further Details on the Feature Fusion Network.}
\label{sec:fusion_network_details}
We discuss in detail the neural network which fuses the input features, thereby extending Section \ref{sec:cost_learning}. 
\paragraph{Architecture of the fusion network. } 
Considering one assignment hypothesis represented by an edge $e=vw$, the DeepMatching densities $\rho \in [0,1]^{6}$ as well the temporal distance $t$ between the corresponding detections $v$ and $w$ serve as a confidence score for the remaining input features.  
They describe which of the input features is a reliable metric for a given assignment hypothesis, but they are not giving any information about the correctness of the assignment hypothesis. We transform the density features non-linearly and denote them together with the temporal distance as control features $\mathcal{C}(e):=(\log(\rho),t) \in \R^{6} \times [0,2]$. The remaining features described in Section \ref{sec:cost_learning} are denoted as  $\mathcal{F}(e) \in [0,1]^{n}$.

One plausible architecture is to use a convex combination of the input features, such that the coefficients depend on the control features. To this end, let $\alpha_{i}(\mathcal{C}(e),W_{\alpha_{i}})$ for $i=1,\cdots,n$ denote a neural network with the control features as input and $W_{\alpha_{i}}$ as learnable weights. Further, let $\beta_i(\mathcal{F}(e)_{i},W_{\beta_{i}})$ for $i=1,\cdots,n$ be a neural network applied to $i$-th feature of $\mathcal{F}(e)$, with learnable weights $W_{\beta_{i}}$. 

The input features and control features can then be fused via

\begin{equation}
\label{eqn:convex_fusion}
\sum_{i=1}^{n} \alpha_{i}(\mathcal{C}(e),W_{\alpha_{i}}) \beta_i(\mathcal{F}(e)_{i},W_{\beta_{i}}),     
\end{equation}{}
such that
\begin{equation}
\sum_{i=1}^{n} \alpha_{i}(\mathcal{C}(e),W_{\alpha_{i}}) = 1.
\end{equation}
To ensure stable training, \eqref{eqn:convex_fusion} should be applied to a sigmoid function and trained using binary cross-entropy loss.

Nonetheless, our tracker implementation employs neural network based mainly on a combination of relu units and fully connected layers, which performed slightly better, still sharing the idea of seperating the input into  control features and input features.
The detailed architecture is depicted in Figure \ref{fig:fusion_network}.

\begin{figure}
    \centering
    \includegraphics{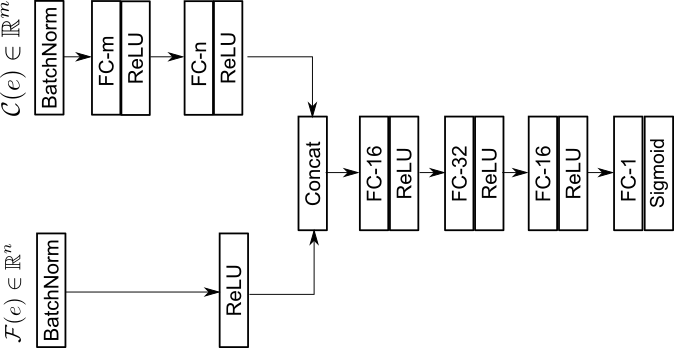}
    \caption{The architecture of the edge classifier used in Lif\_T. FC-$i$ denotes a fully-connected layer with $i$ nodes in as outputs. 
    Using a concatenation with subsequent fully connected layer, $m$ control features and $n$ input features are fused. }
    \label{fig:fusion_network}
\end{figure}{}

\paragraph{Training details.}
Training of the neural network is performed directly on the (preprocessed) input detections.  Labels are retrieved by assigning each detection to the best fitting ground-truth bounding box. Detections with ambiguous assignments are ignored within the training phase. 

In order to train the edge classifier, special care has to be taken as the training set is highly imbalanced. The number of edges which correspond to true negatives (pairs of detections which do not belong to the same person) clearly dominates the number of true positive edges (pairs of detections belonging to the same person). 

To address this issue, the network is trained on a randomly sampled subset of all possible edges, such that the ratio of true positive edges and true negative edges per time distance between the end nodes of the edges remains fixed.  The maximal temporal distance of an edge is set to $2$ seconds, allowing to recover persons even after long occlusions. 

The weights of the fusion network are optimized according to the binary cross-entropy loss.
We employ stochastic gradient descent with the learning rate set to $10^{-2}$ and Nesterov momentum set to $0.9$, for a total of 10 epochs. Training and inference is performed using Pytorch 1.3 on a Nvidia  RTX 2080 Ti. 

\paragraph{Accuracy of the fusion network.}
The performance of a tracking system depends highly on the accuracy of the edge classifier (and the corresponding edge weights).

Therefore, we report our evaluation of the edge classifier on all training sequences of the filtered MOT17 train set in Table \ref{tab:edge_classifier}.
Together with Table \ref{tab:mot_eval_per_sequence} and Table \ref{tab:mot}, it shows that improvements in the tracking features directly correlate to high quality tracking results thanks to the proposed solver. While Table \ref{tab:edge_classifier} shows very good performance of the edge classifier, a powerful graph model and solver is still crucial to obtain high quality tracking results. Even small errors (we observed 5\% maximal error) in the edge classifier can cause many errors in the tracking results if an unsuitable procedure is used. 
Also note that for training the edge classifier, detections with ambiguous assignment to the ground truth boxes were ignored. So, these potentially difficult cases are excluded int the evaluation of the edge classifier.
Especially the interpolation and extrapolation is prone to error propagation, once a single identity switch has been created, which heavily affects, among others,  the IDF1 score. Our lifted disjoint paths formulation can be advantageous, since lifted edges aggregate multiple edge classifiers which can correct individual wrong classifications of single edges.

\begin{table}[]
    \centering
    \resizebox{\columnwidth}{!}{
    \begin{tabular}{l  c c c c}
    \toprule
        Sequence & Acc $\uparrow$ & Prec $\uparrow$  & TPR $\uparrow$ & TNR $\uparrow$ \\ 
        \toprule
        MOT17-02-DPM & $1.00$ & $0.99$ & $1.00$ & $1.00$ \\
        MOT17-04-DPM & $1.00$ &  $0.98$ & $0.99$ & $1.00$ \\
        MOT17-05-DPM & $0.95$ & $0.95$ & $1.00$ & $0.99$\\
        MOT17-09-DPM & $1.00$ & $0.98$ & $0.98$ & $1.00$ \\
        MOT17-10-DPM & $1.00$ & $0.99$ & $0.99$ & $1.00$ \\
        MOT17-11-DPM & $1.00$ & $1.00$ & $0.99$ & $1.00$ \\
        MOT17-13-DPM & $0.99$ & $0.97$ & $0.96$ & $1.00$ \\
        
        MOT17-02-SDP & $1.00$ & $0.96$ & $1.00$ & $1.00$ \\
        MOT17-04-SDP & $1.00$ & $0.98$ & $0.98$ & $1.00$ \\
        MOT17-05-SDP & $0.99$ & $0.92$ & $1.00$ & $0.98$ \\
        MOT17-09-SDP & $0.97$ & $0.81$ & $0.99$ & $0.97$ \\
        MOT17-10-SDP & $0.99$ & $0.94$ & $0.97$ & $1.00$ \\
        MOT17-11-SDP & $1.00$ & $0.99$ & $0.99$ & $1.00$ \\
        MOT17-13-SDP & $0.99$ & $0.90$ & $0.96$ & $0.99$ \\
        
        MOT17-02-FRCNN & $1.00$ & $0.98$ & $1.00$ & $1.00$ \\
        MOT17-04-FRCNN & $1.00$ & $0.97$ & $0.99$ & $1.00$ \\
        MOT17-05-FRCNN & $0.99$ & $0.94$ & $1.00$ & $1.00$ \\
        MOT17-09-FRCNN & $0.99$ & $0.97$ & $0.98$ & $1.00$ \\
        MOT17-10-FRCNN & $0.99$ & $0.95$ & $0.98$ & $1.00$ \\
        MOT17-11-FRCNN & $1.00$ & $0.99$ & $0.99$ & $1.00$ \\
        MOT17-13-FRCNN & $0.99$ & $0.90$ & $0.95$ & $0.99$ \\
        \bottomrule
    \end{tabular}}
    \caption{Performance metrics on the edge classifier. The performance is measured in terms of the accuracy (Acc), precision (Prec), true positive rate (TPR) and true negative rate (TNR). The arrows indicate that higher metric values are better.  }
    \label{tab:edge_classifier}
\end{table}

\subsection{Extended Quantitative  Results}
\label{sec:results_all_sequences}
We provide additional evaluations on our tracking system as well as on the lifted disjoint paths solver.

\paragraph{Detailed tracking evaluations.}
We provide the evaluations of the MOT15, MOT16, MOT17 test sets as well as the MOT17 train set per sequence in Table \ref{tab:mot_eval_per_sequence}. 
In addition, the table contains the solver time (STime) in seconds, needed to solve the corresponding lifted disjoint paths problem.

\begin{table*}
\center
\tabcolsep=0.09cm

    \resizebox{1.3\columnwidth}{!}{
    \begin{tabular}{c l c c c c c c c c c}
     \toprule
     & Sequence  & MOTA$\uparrow$  & IDF1$\uparrow$  & MT$\uparrow$  & ML$\downarrow$  & FP$\downarrow$ & FN$\downarrow$ & IDS$\downarrow$ & Frag$\downarrow$ &STime$\downarrow$ \\   
     \toprule
     \parbox[t]{3mm}{\multirow{6}{*}{\rotatebox[origin=c]{90}{MOT17-Train}}}
 & MOT17-02-DPM & $40.5$ & $50.3$ & $13$ & $29$ & $19$ & $11017$ & $26$ & $23$ & $127$ \\ 
 & MOT17-04-DPM & $69.9$ & $73.9$ & $41$ & $22$ & $298$ & $13986$ & $38$ & $41$ & $1521$ \\ 
 & MOT17-05-DPM & $58.2$ & $67.0$ & $31$ & $40$ & $40$ & $2824$ & $27$ & $65$ & $36$ \\ 
 & MOT17-09-DPM & $72.9$ & $71.6$ & $14$ & $1$ & $58$ & $1370$ & $15$ & $7$ & $59$ \\ 
 & MOT17-10-DPM & $67.4$ & $70.2$ & $26$ & $8$ & $106$ & $4043$ & $39$ & $82$ & $173$ \\ 
 & MOT17-11-DPM & $67.3$ & $73.9$ & $24$ & $26$ & $55$ & $3017$ & $11$ & $28$ & $115$ \\ 
 & MOT17-13-DPM & $63.6$ & $67.2$ & $45$ & $36$ & $64$ & $4127$ & $43$ & $48$ & $59$ \\ 
 & MOT17-02-FRCNN & $47.4$ & $57.2$ & $15$ & $22$ & $89$ & $9656$ & $26$ & $27$ & $229$ \\ 
 & MOT17-04-FRCNN & $67.5$ & $74.1$ & $38$ & $21$ & $98$ & $15310$ & $29$ & $13$ & $1535$ \\ 
 & MOT17-05-FRCNN & $60.2$ & $68.9$ & $35$ & $36$ & $73$ & $2651$ & $30$ & $62$ & $92$ \\ 
 & MOT17-09-FRCNN & $71.5$ & $72.9$ & $14$ & $1$ & $54$ & $1451$ & $10$ & $7$ & $51$ \\ 
 & MOT17-10-FRCNN & $73.2$ & $76.2$ & $33$ & $2$ & $270$ & $3096$ & $73$ & $145$ & $398$ \\ 
 & MOT17-11-FRCNN & $73.1$ & $78.8$ & $32$ & $18$ & $82$ & $2436$ & $18$ & $27$ & $133$ \\ 
 & MOT17-13-FRCNN & $77.1$ & $75.8$ & $68$ & $10$ & $203$ & $2394$ & $73$ & $89$ & $388$ \\ 
 & MOT17-02-SDP & $55.0$ & $61.3$ & $16$ & $16$ & $65$ & $8236$ & $52$ & $50$ & $586$ \\ 
 & MOT17-04-SDP & $77.7$ & $81.8$ & $46$ & $13$ & $243$ & $10296$ & $49$ & $66$ & $4133$ \\ 
 & MOT17-05-SDP & $64.0$ & $69.5$ & $41$ & $22$ & $105$ & $2351$ & $33$ & $84$ & $80$ \\ 
 & MOT17-09-SDP & $73.0$ & $73.0$ & $14$ & $1$ & $69$ & $1356$ & $12$ & $12$ & $127$ \\ 
 & MOT17-10-SDP & $75.0$ & $78.6$ & $35$ & $2$ & $349$ & $2759$ & $105$ & $160$ & $756$ \\ 
 & MOT17-11-SDP & $74.4$ & $78.4$ & $36$ & $14$ & $115$ & $2277$ & $27$ & $36$ & $198$ \\ 
 & MOT17-13-SDP & $70.8$ & $71.4$ & $62$ & $24$ & $200$ & $3150$ & $55$ & $81$ & $364$ \\ 
  & MOT17-Train & $67.0$ & $72.4$ & $679$ & $364$ & $2655$ & $107803$ &  $791$ & $1153$ & $11430$ \\

     \midrule
     \parbox[t]{3mm}{\multirow{6}{*}{\rotatebox[origin=c]{90}{MOT17-Test}}} &
 MOT17-01-DPM &  $48.3$ &$ 58.1 $& $8 $ & $11$  & $68$ & $3258$ & $10$ & $19$  & $38$ \\ 
& MOT17-03-DPM  &  $73.3$ & $70.1$ & $82$ & $17$ & $3560$ & $24276$ & $160$ & $256$  & $24311$ \\ 
& MOT17-06-DPM  &  $58.1$ & $64.7$ & $61$ & $77$ & $178$ & $4728$ & $28$ & $155$  & $113$ \\ 
& MOT17-07-DPM  &  $44.4$ & $52.3$ & $7$ & $21$ & $155$ & $9176$ & $60$ & $209$  & $297$ \\ 
& MOT17-08-DPM  &  $34.7$ & $47.4$ & $18$ & $37$ & $254$ & $13507$ & $32$ & $44$  & $146$ \\ 
& MOT17-12-DPM  &  $48.3$ & $62.3$ & $18$ & $41 $ & $35$ & $4437$ & $11$ & $52$  & $68$ \\ 
& MOT17-14-DPM  &  $36.1$ & $48.8$ & $12$ & $77$ & $268$ & $11449$ & $91$ & $239$  & $323$ \\ 
& MOT17-01-FRCNN  &  $47.7$ & $58.1$ & $8$ & $10$ & $246$ & $3119$ & $7$ & $24$  & $79$ \\ 
& MOT17-03-FRCNN  &  $72.2$ & $71.8$ & $71$ & $17$ & $2664$ & $26277$ & $124$ & $250$  & $11678$ \\ 
& MOT17-06-FRCNN  &  $60.4$ & $63.7$ & $68$ & $61$ & $279$ & $4358$ & $32$ & $207$  & $203$ \\ 
& MOT17-07-FRCNN  &  $44.0$ & $54.9$ & $8$ & $20$ & $279$ & $9110$ & $63$ & $227$  & $281$ \\ 
& MOT17-08-FRCNN  &  $31.9$ & $43.3$ & $17$ & $37$ & $383$ & $13973$ & $35$ & $59$  & $130$ \\ 
& MOT17-12-FRCNN  &  $47.3$ & $58.0$ & $16$ & $43$ & $37$ & $4521$ & $11$ & $34$  & $84$ \\ 
& MOT17-14-FRCNN  &  $36.2$ & $49.0$ & $16$ & $72$ & $629$ & $11061$ & $108$ & $358$  & $359$ \\ 
& MOT17-01-SDP  &  $47.8$ & $57.8$ & $9$ & $10$ & $346$ & $3008$ & $10$ & $31$  & $95$ \\ 
& MOT17-03-SDP  &  $78.2$ & $77.3$ & $92$ & $13 $ & $3778$ & $18879$ & $132$ & $323$  & $16219$ \\ 
& MOT17-06-SDP  &  $60.3$ & $65.1$ & $67$ & $64$ & $305$ & $4345$ & $33$ & $217$  & $144$ \\ 
& MOT17-07-SDP  &  $45.8$ & $55.0$ & $8$ & $18 $ & $285$ & $8793$ & $71$ & $280$  & $483$ \\ 
& MOT17-08-SDP  &  $34.8$ & $47.7$ & $18$ & $34 $ & $429$ & $13288$ & $48$ & $69$  & $202$ \\ 
& MOT17-12-SDP  &  $47.3$ & $60.7$ & $18$ & $42 $ & $158$ & $4394$ & $14$ & $53$  & $85$ \\ 
& MOT17-14-SDP  &  $38.3$ & $51.4$ & $15$ & $69 $ & $630$ & $10662$ & $109$ & $370$  & $376$ \\

     \midrule
     
     \parbox[t]{3mm}{\multirow{6}{*}{\rotatebox[origin=c]{90}{MOT16}}} &
     MOT16-01 &  $48.3$ &$ 58.2 $& $8 $ & $10$  & $78$ & $3217$ & $10 $& $19$  & $38$ \\ 
& MOT16-03  &  $73.0$ & $69.9$ & $80$ & $17$ & $3732$ & $24329$ & $159$ & $310$  & $24311$ \\ 
& MOT16-06  &  $58.2$ & $64.7$ & $62$ & $77$ & $249$ & $4548$ & $29$ & $159$  & $113$ \\ 
& MOT16-07  &  $45.6$ & $53.4$ & $7$ & $16 $ & $189$ & $8637$ & $57$ & $212$  & $297$ \\ 
& MOT16-08  &  $43.4$ & $55.7$ & $18$ & $24 $ & $284$ & $9149$ & $32$ & $44$  & $146$ \\ 
& MOT16-12  &  $50.2$ & $64.0$ & $18$ & $37 $ & $44$ & $4072$ & $11$ & $51$  & $68$ \\ 
& MOT16-14  &  $36.1$ & $48.8$ & $12$ & $77$ & $268$ & $11449$ & $91$ & $239$  & $323$ \\ 
     \midrule
     
     \parbox[t]{3mm}{\multirow{6}{*}{\rotatebox[origin=c]{90}{2D MOT15}}} &
     ADL-Rundle-1 &  $39.6$ &$ 60.8 $& $13 $ & $2$  & $2277$ & $3303$ & $44 $& $175$  & $325$ \\ 
     & ADL-Rundle-3  &  $59.2$ & $69.9$ & $23$ & $7 $ & $902$ & $3217$ & $29$ & $42$  & $153$ \\ 
     & AVG-TownCentre  &  $61.8$ & $67.3$ & $96$ & $33 $ & $417$ & $2217$ & $99$ & $213$  & $20$ \\ 
     & ETH-Crossing  &  $57.6$ & $69.3$ & $7$ & $9 $ & $35$ & $387$ & $3$ & $18$  & $2$ \\ 
     & ETH-Jelmoli  &  $51.4$ & $67.1$ & $18$ & $14 $ & $520$ & $701$ & $12$ & $44$  & $20$ \\ 
& ETH-Linthescher  &  $53.7$ & $62.2$ & $42$ & $98 $ & $318$ & $3795$ & $21$ & $95$  & $11$ \\ 
& KITTI-16  &  $36.2$ & $32.7$ & $5$ & $1 $ & $456$ & $521$ & $108$ & $60$  & $57$ \\ 
& KITTI-19 &  $43.3$ & $49.4$ & $11$ & $17 $ & $467$ & $2315$ & $249$ & $142$  & $135$ \\ 
& PETS09-S2L2  &  $56.9$ & $43.6$ & $9$ & $2$ & $476$ & $3531$ & $152$ & $225$  & $180$ \\ 
& TUD-Crossing  &  $88.0$ & $90.9$ & $11$ & $0 $ & $64$ & $62$ & $6$ & $13$  & $13$ \\ 
& Venice-1  &  $45.8$ & $62.1$ & $9$ & $3$ & $905$ & $1561$ & $7$ & $20$  & $30$ \\ 
     \bottomrule
    \end{tabular}}

\caption{We provide the results of our tracker Lif\_T, evaluated per sequence. In addition, we provide the time necessary to solve the corresponding lifted disjoint path problem instance (STime), in seconds. 
Arrows indicate whether low or high metric values are better. Tracking results on the test sets were evaluated by the MOTChallenge server \url{https://www.motchallenge.net}}
\vspace{-0.2cm}
\label{tab:mot_eval_per_sequence}

\end{table*}

\end{document}